\newcommand{\Eb}{\mathbb{E}}
\newcommand{\RR}{\mathbb{R}}
\newcommand{\nn}{\nonumber}
\newcommand{\g}{{G_u}}
\newcommand{\vb}{\mathbf{v}}
\newcommand{\di}{{D_v}}
\newcommand{\U}{\mathcal{U}}
\newcommand{\D}{\mathcal{D}}
\newcommand{\V}{\mathcal{V}}
\newcommand{\E}{\mathbb{E}}
\newcommand{\xt}{{\tilde{x}}}
\newcommand{\M}{\mathcal{M}}
\newcommand{\J}{\mathbf{J}}
\newcommand{\Y}{\mathcal{Y}}
\newtheorem{as}{Assumption}
\newtheorem{thm}{Theorem}
\newtheorem*{thm*}{Theorem}
\newtheorem{definition}{Definition}
\ifcvprfinal\pagestyle{empty}\fi
\begin{document}

\title{MR-GAN: Manifold Regularized Generative Adversarial Networks}

\author{Qunwei Li\\
Syracuse University\\
{\tt\small qli33@syr.edu}
\and
Bhavya Kailkhura \\
Lawrence Livermore National Laboratory\\
{\tt\small kailkhura1@llnl.gov}
\and
Rushil Anirudh \\
Lawrence Livermore National Laboratory\\
{\tt\small anirudh1@llnl.gov}
\and
Yi Zhou \\
Duke University\\
{\tt\small yi.zhou610@duke.edu}
\and
Yingbin Liang \\
The Ohio State University\\
{\tt\small liang.889@osu.edu}
\and
Pramod K. Varshney \\
Syracuse University\\
{\tt\small varshney@syr.edu}
}

\maketitle

\begin{abstract}
Despite the growing interest in generative adversarial networks (GANs), training GANs remains a challenging problem, both from a theoretical and a practical standpoint. To address this challenge, in this paper, we propose a novel way to exploit the unique geometry of the real data, especially the manifold information. More specifically, we design a method to regularize GAN training by adding an additional regularization term referred to as manifold regularizer. The manifold regularizer forces the generator to respect the unique geometry of the real data manifold and generate high quality data. Furthermore, we theoretically prove that the addition of this regularization term in any class of GANs including DCGAN and Wasserstein GAN leads to improved performance in terms of generalization, existence of equilibrium, and stability. Preliminary experiments show that the proposed manifold regularization helps in avoiding mode collapse and leads to stable training.
\end{abstract}

\section{Introduction}
Ever since the introduction of generative adversarial networks (GANs) \cite{goodfellow2014generative}, they have become one of the most widely used deep generative models to synthetically generate complicated real-world data. The core of the training of GANs is a min-max game in
which two neural networks (generator and discriminator) compete with each other: the generator tries to trick the discriminator/classifier into classifying its generated synthetic/fake data as true. 
Various applications have benefited from the utilization of GANs, e.g., video prediction, object generation, photo super resolution~(see \cite{ledig2017photo,tulyakov2017mocogan} and the references therein).

Despite the interest that GANs have drawn, the task of training GANs remains a challenging problem, both from a theoretical and a practical standpoint. 
{{Specifically, GAN training suffers from the following major problems: $a)$ \textit{mode-collapse}: the generator collapses with results in a poor generalization, i.e., producing limited varieties of samples, $b)$ \textit{lack of equilibrium}: the min-max game may not have any  equilibrium, and $c)$ \textit{instability}: even when the equilibrium exists, model parameters may oscillate, destabilize and never converge to an equilibrium.}} These failure modes result in generation of poor quality data. 

It was shown in~\cite{arjovsky2017towards} that the real data lies in a submanifold, and the generated data and real data lying in disjoint manifolds is one of the reasons for the aforementioned problems in the training of GANs. Motivated from this insight, this paper takes some initial steps towards designing GAN architectures which can exploit the unique geometry of the real data (especially the manifold information) to overcome the aforementioned problems. 
The basic idea
is simple yet powerful: in addition to the gradient
information provided by the discriminator, we want
the generator to exploit other geometric information present in the real data, such
as the manifold information. Taking advantage of this additional information, we will have more stable gradients while
training our generator. 
Specifically, we propose a novel method for incorporating geometry and regularizing the GAN training by adding an additional regularization term (referred to as manifold regularizer) with generator updates. The proposed manifold regularizer forces the generator to respect the unique geometry of the real data manifold. We theoretically prove that the addition of this regularization term in any class of GANs (including DCGAN and Wasserstein GAN) leads to improved performance in terms of generalization, equilibrium, and stability.
In practice, the manifold regularized GANs (MR-GANs) are simple to implement, and results in better performance compared to their unregularized counterparts.

\subsection{Related Work}
In the literature, not much theory exists that explains the unstable behaviour of GAN training except for \cite{arjovsky2017towards} that stands out as one of the most successful work. The authors provided important insights into mode collapse and instability in GAN training. They showed that these issues arise when the supports of the generated distribution and the true distribution are disjoint. The authors in~\cite{arora2017generalization}, on the other hand, explored questions relating to the sample complexity and expressiveness of the GAN architecture and their relation to the existence of an equilibrium. Given that an equilibrium exists, the convergence of GAN with update procedure using gradient descent was studied in~\cite{nagarajan2017gradient}.

From a practical perspective, various architectures and training objectives have been proposed to address GAN training challenges~\cite{arjovsky2017wasserstein,poole2016improved,khayatkhoei2018disconnected}. Several optimization heuristics and architectures have also been proposed to address challenges such as mode collapse~\cite{salimans2016improved,unrolled,radford2015unsupervised,mode_reg}. Methods for regularizing the discriminator for better stability were devised in~\cite{roth,numerics,nagarajan2017gradient,lecouat2018semi}. The authors in~\cite{roth} presented a stabilizing regularizer that is based on a gradient norm, where the gradient is calculated with respect to the data samples. On the other hand, the authors of \cite{numerics,nagarajan2017gradient} designed regularizers based on the norm of a gradient calculated with respect to the parameters. The authors in \cite{lecouat2018semi} applied a Jacobian regularizer to the discriminator of a feature-matching GAN to improve the performance of GAN-based semi-supervised learning. In contrast to regularizing the discriminator, this paper proposes to regularize the generator for improving GAN training. Finally, the authors in~\cite{park2017mmgan} proposed replacing the original GAN loss with a different loss function matching the statistical mean and radius of the spheres approximating the geometry of the real data and generated data. However, characterizing the geometric information of the data only by the mean and radius of loses a significant amount of geometrical information. The construction in~\cite{park2017mmgan} was purely heuristic and did not have any theoretical backing. On the contrary, we cirectly exploit the undistorted manifold information for regularizing the training of the generator rather than treating it  as a loss function and theoretically prove that the proposed approach yields improved performance in terms of generalization, existence of equilibrium, and stability.

\subsection{Contribution}
The main contributions of the paper are summarized as follows:
\begin{itemize}
    \item We propose a novel method for regularizing GAN training by incorporating an additional regularization term to respect the unique geometry of the real data manifold.
    
	\item It is proved that the proposed training objective function can be realized with a sufficiently small bias using deep neural networks (DNNs) and, thus, {{yields excellent generalization performance}}.
	
	\item It is proved that the equilibrium of the min-max game for the proposed MR-GANs exists, and can be attained by DNNs in practice.
	
	\item It is proved that the training of the proposed MR-GAN is exponentially stable around the equilibrium.

	\item It is shown empirically that the MR-GANs are able to avoid model collapse and significantly outperform several widely used baseline GAN architectures.

\end{itemize}

\section{Preliminaries}
In this section, we give a brief introduction of GANs and manifold learning. We will also briefly discuss how manifold learning principles can be exploited to have a better GAN formulation. 

\subsection{Introduction of GANs}

Throughout the paper, we use $d$ for the dimension of samples, $p$ for the number of parameters in generator/discriminator, and $m$ for the number of samples. Let $\{G_u, u \in \mathcal{U}\}$, with $\mathcal{U} \in \RR^p$, denote the class of generators, where $\g$ is a function -- which is often a neural network in practice -- from $\RR^l \rightarrow \RR^d$ indexed by $u$ that denotes the parameters of the generators. Here $\U$ denotes the possible ranges of the parameters and without loss of generality we assume that $\U$ is a subset of the unit ball. The generator $\g$ defines a distribution $\D_\g$ as follows: generate $h$ from an $l$-dimensional spherical Gaussian distribution and apply $\g$ on $h$ and generate a sample $x = G_u(h)$ from the distribution $\D_\g$. We drop the subscript $u$ in $\D_\g$ when it is clear from the context. 
Let $\{\di, v \in \mathcal{V}\}$ denote the class of discriminators, where $\di$ is function from $\RR^d$ to $[0, 1]$ and $v$ is the parameter of $\di$. Training the discriminator consists of making its output a high value (preferably $1$) when $x$ is sampled from the distribution $\D_{real}$ and a low value (preferably $0$) when $x$ is sampled from the synthetic distribution $\D_\g$. On the contrary, training the generator consists of making its synthetic
distribution ``similar'' to $\D_{real}$ in the sense that the discriminator's output tends to indicate that the two distributions are close.

The original GAN training problem \cite{goodfellow2014generative} is formulated as the following min-max game between the generator and the discriminator:
\begin{align}
\min_{u\in \U} \max_{v\in \V} \mathop{\E}_{x\sim \D_{real}}[\log \di(x)]+\mathop{\E}_{y\sim \D_{\g}} [\log(1-\di(y))].\nonumber
\end{align}

Intuitively, this says that the discriminator $\di$ should give high values $\di(x)$ to the real samples and low values $\di(y)$ to the generated examples. The log function is used because of its interpretation as the likelihood. However, in practice it can cause problems since $\log x \rightarrow -\infty$ as $x \rightarrow 0$. The objective still makes intuitive sense if we replace log by any monotone function $\phi : [0, 1] \rightarrow \RR$, which yields the
objective:
\begin{align}
\min_{u\in \U} \max_{v\in \V} \mathop{\E}_{x\sim \D_{real}}\mathop{\E}_{y\sim \D_{\g}}[\phi (\di(x))+ \phi(1-\di(y))].\nonumber
\end{align}
We call the function $\phi$ the measuring function. It should be concave so that when $\D_{real}$ and $\D_{\g}$ are the same distributions, the best strategy for the discriminator is simply to output $1/2$ and the optimal value is $2\phi(1/2)$. In later proofs, we require $\phi$ to be bounded and Lipschitz. In practice, training often uses $\phi(x) = \log(\delta + (1 - \delta)x)$ (which takes values in $[\log \delta, 0]$ and is $1/\delta$-Lipschitz)
and the recently proposed Wasserstein GAN \cite{arjovsky2017wasserstein} objective function uses $\phi(x) = x$ {{(which takes values in $[0,1]$ (by definition) and is $1$-Lipschitz)}}.

\subsection{Manifold Learning}
In several machine learning applications, the data lies on or close to the surface of one or more low-dimensional manifolds embedded in the high-dimensional ambient space. 
Attempting to uncover this manifold structure in a data set is referred to as manifold learning~\cite{belkin2006manifold}. 

Given a set $\mathbf{X}$ of data (or feature) vectors, a graph $\mathcal{G}=\{\mathbf{X},\mathbf{\Omega}\}$
is used to characterize the manifold-based relationships among these vectors. Here, $\mathbf{\Omega}=[w_{ij}]$
is  a  matrix  containing  the
weights over edges connecting graph nodes and is referred to as
the affinity matrix. The weight, $w_{ij}$, on an edge connecting two
nodes,
$\mathbf{x}_i$
and
$\mathbf{x}_j$, provides a measure of closeness between them. These weights govern various characteristics
of a graph, including structure, connectivity and compactness.
Graph-based relationships are usually characterized using
the Euclidean distance based Gaussian heat kernel given by

\begin{align}\label{weight}
w_{ij}=\left\{\begin{matrix}
\exp\left ( \frac{-\|\mathbf{x}_{i}-\mathbf{x}_{j}\|^2}{\rho} \right ) & ; & e(\mathbf{x}_{i},\mathbf{x}_{j})=1\\ 
0& ; & \text{otherwise}
\end{matrix}\right.
\end{align}
where $\rho$ is  the kernel  scale parameter and the function $e(\mathbf{x}_{i},\mathbf{x}_{j})$ indicates whether $\mathbf{x}_{i}$ lies near the neighborhood of $\mathbf{x}_{j}$. 
As an example, given input $\mathcal{G}=\{\mathbf{X},\mathbf{\Omega}\}$, manifold learning inspired learning approaches attempt to constrain the output, $\mathbf{y}=f(\mathbf{X})$\footnote{Here $f(\cdot)$ is a task specific function.}, to preserve the structure (compactness) in $\mathbf{X}$ (defined by the affinity weight matrix $\mathbf{\Omega}$). This is usually achieved by employing a regularization term along with a task specific loss function.
A case of particular recent interest in manifold regularized learning is when the support of the data is a compact submanifold $\mathcal{M}\in R^d$. In that case, one natural choice for the regularizer is $\int_{x\in \mathcal{M}}\|\nabla _{\mathcal{M}} \mathbf{y}\|^2dP_X(x)$, where $\nabla _{\mathcal{M}}$ is the gradient of $\mathbf y$ along the manifold $\mathcal{M}$ and the integral is taken over the marginal distribution. In most applications, the marginal $P_X$ is not known. Therefore, we need to get empirical estimates
of $P_X$ and the regularizer. The term
$\int_{x\in \mathcal{M}}\|\nabla _{\mathcal{M}} \mathbf{y}\|^2dP_X(x)$ may be approximated on the basis of data samples
using the graph Laplacian  $L$ associated to the data, which yields an estimate $\frac{1}{m^2}Tr(\mathbf y^TL\mathbf y)$, where $\mathbf{y}=[\mathbf{y}_1, \mathbf{y}_2,\ldots, \mathbf{y}_m]$.
For weighted graph Laplacian (or affinity matrix $\mathbf{\Omega}$), this estimate becomes $\sum \limits _{i,j} \|\mathbf{y}_{i}-\mathbf{y}_{j}\|^2w_{ij}$.

\section{Manifold Regularized GAN (MR-GAN)}
\subsection{Geometry-Aware GANs}
{{Although a reasonable approach for GANs would be to use the conventional manifold regularizer $Tr(\mathbf{y}^TL\mathbf{y})$ at the generator to force the generated data $\mathbf{y}$ to respect the geometry of the real data $\mathbf{X}$, our initial experiments suggested that the conventional regularizer does not perform well in practice. In fact, with the conventional manifold regularizer 
		at the generator, our theoretical analysis also indicates that the equilibrium cannot be guaranteed. Therefore, we propose a novel regularizer at the generator to force the generated data to respect the geometry of the real data. Furthermore, for this new formulation, we theoretically show that some of the issues with GAN training can be overcome.}}
\subsection{Proposed GAN Architecture}
Motivated by the considerations above, in this section we propose a novel regularization penalty for the
generator updates, which employs a term based on the gradient of the embedding function $\psi$ in the intrinsic manifold, to incorporate the fact that the real data is indeed extremely concentrated on a low dimensional manifold \cite{narayanan2010sample}. The embedding function $\psi$ serves two purposes. First, it extracts useful information from the raw data for better inference. Second, it is a dimension-reduction mapping, which can prevent overfitting during training. {As we will show later that the regularization term does not change the parameter values at the equilibrium point, and further enhances the local stability of the optimization procedure.}  Specifically, we propose the following regularized objective of MR-GAN\footnote{Please see \eqref{mr-emp} for an empirical version of the MR-GAN formulation.} as follows:
\begin{align}\label{objective_function}
\min_{u\in \U} \max_{v\in \V} &\mathop{\E}_{x\sim \D_{real}}\mathop{\E}_{y\sim \D_{\g}}[\phi (\di(x))+ \phi(1-\di(y))
\nn\\&+\lambda \int_{x \sim\M} \|\nabla_\M (\psi(y)-\psi(x))\|^2dPx],
\end{align}
where $\psi$ is an embedding function which takes the form $\psi: x \rightarrow \xt$, and $\xt$ lies within a manifold embedded in $\RR^d$. 
 Essentially, the regularizer is the squared magnitude of the gradient  of the embedding function in the intrinsic manifold, with respect to the difference between the real and generated data. {When the support of distribution $\D_{real}$ lies in the manifold $\M$, the objective \eqref{objective_function} becomes the following because we have an expectation operator over the distribution of the real data:
	\begin{align}\label{problem}
	\min_{u\in \U} \max_{v\in \V} \mathop{\E}_{x\sim \D_{real}}\mathop{\E}_{y\sim \D_{\g}}&[\phi (\di(x))+ \phi(1-\di(y))+\nn\\&
	\lambda \|\nabla_\M (\psi(y)-\psi(x))\|^2].
	\end{align}}
We show later that the proposed MR-GAN architecture enjoys provable convergence guarantees.

\subsection{Manifold Regularized Training}
We provide intuitions that the objective function of the proposed MR-GAN helps in aligning the manifold of the generated data with the manifold of the real data. Let us denote the objective function of MR-GAN as 
\begin{align}
F(u,v)= \mathop{\E}_{x\sim \D_{real}}&\mathop{\E}_{y\sim \D_{\g}}[\phi (\di(x))+ \phi(1-\di(y))\nn\\
&+\lambda \|\nabla_\M (\psi(y)-\psi(x))\|^2].
\end{align}

{\textbf{Regularized Gradients.}} Note that the gradient for the generator of MR-GAN is given by
\begin{align}
\frac{\partial F(u,v)}{\partial u} = \mathop{\E}_{x\sim \D_{real}}\int_{\mathcal{Y}}\nabla_u (p_u(y) \phi(1-\di(y))\nn\\+\lambda \|\nabla_\M (\psi(y)-\psi(x))\|^2)dy,
\end{align}
where $\mathcal{Y}$ is the domain of the generated samples, and $p_u$ is the probability density function of the distribution $\D_{\g}$ for the generated samples and is dependent on $u$.  
The first term $\mathop{\E}_{x\sim \D_{real}}\int_{\mathcal{Y}}\nabla_u p_u(y) \phi(1-\di(y,\xt))dy$ follows the geometric properties of the measuring function $\phi$.
When the manifold $\M_\Y$ (where the support of $\mathcal Y$ lies) and the manifold $\M$ are far away, $\|\nabla_\M (\psi(y)-\psi(x))\|^2$ is very large. This should strongly drive $\M_\Y$ to $\M$. When $\M_\Y$ and $\M$ become closer, $\|\nabla_\M (\psi(y)-\psi(x))\|^2$ is smaller. This resembles $L^2$ optimization in general, where the loss function offers an adaptive gradient toward the optima. The gradient $\nabla_\M$ provides a {{multi-modal weighting}} and the modes of $\D_{real}$ will thus drive the gradient in training the generator.


{{\textbf{Bounded Objective Function.}}
	Additionally, recalling the objective function \eqref{objective_function}, the regularizer plays a role in the training of the generator, which has the form
	\begin{align}
	\min_{u\in \U}  \mathop{\E}_{x\sim \D_{real}}\mathop{\E}_{y\sim \D_{\g}} [\phi(1-\di(y))+\nn\\\lambda \|\nabla_\M (\psi(y)-\psi(x))\|^2].
	\end{align}
	If the embedding function $\psi$ is $L_\psi$-Lipschitz smooth on the manifold (which we assume in the sequel), we have the following inequality
	\begin{align}
	\|\nabla_\M (\psi(y)-\psi(x))\|^2  \le L_\psi^2 \|y-x\|^2,
	\end{align}
	and further we can obtain
	\begin{align}
	&	\min_{u\in \U}  \mathop{\E}_{x\sim \D_{real}}\mathop{\E}_{y\sim \D_{\g}} \!\![\phi(1\!-\!\di(y))\!+\!\lambda \|\nabla_\M (\psi(y)\!-\!\psi(x))\|^2 \nn\\
	&\le \min_{u\in \U}  \mathop{\E}_{x\sim \D_{real}}\mathop{\E}_{y\sim \D_{\g}} \phi(1-\di(y))+\lambda L_\psi^2 \|y-x\|^2].\nonumber
	\end{align}
	The regularization term $\lambda L_\psi^2 \|y-x\|^2$ imposes the similarity between the generated and real data, and thus our method penalizes dissimilarity between the generated and real data. Essentially, our method finds the generated data closer to the real one and incorporates the geometric information of the real data into the data being generated.} Later we show that $y$ does not overfit to $x$, and $y$ generalizes well with the proposed GAN architecture.

{\textbf{Training Practices.}} The objective function \eqref{objective_function} assumes that we have an infinite number of samples from $\D_{real}$ to estimate the value $\E_{x\sim \D_{real}} [\phi (\di(x,\xt))]$. In practice, the objective function $F(u,v)$ is approximated with a finite number of training samples, which is denoted by $\hat{F}(u,v)$ and is expressed as
\begin{align}
\label{mr-emp}
    \hat{F}(u,v)=\frac{1}{m}\sum_{i=1}^{m} \phi(\di(x_i)) + \phi(1-\di(y_i))\nn\\
    +\frac{1}{m^2}\sum_{i=1,j=1}^{m}\|\psi(y_i)-\psi(x_i)-\psi(y_j)+\psi(x_j)\|^2w_{ij}.
\end{align}
With finite training examples
$x_1, \ldots,x_m \sim \D_{real}$, one uses $\frac{1}{m}\sum_{i=1}^{m} \phi(\di(x_i))$ to estimate the quantity $\E_{x\sim \D_{real}} [\phi (\di(x))]$. Similarly, one can use an empirical version to estimate ${\E}_{x\sim \D_{real}}{\E}_{y\sim \D_{\g}} \phi(1-\di(y))$. The regularization term $\|\nabla_\M (\psi(y)-\psi(x))\|^2$ can be approximated as $\frac{1}{m^2}\sum_{i=1,j=1}^{m}\|\psi(y_i)-\psi(x_i)-\psi(y_j)+\psi(x_j)\|^2w_{ij}$. Here, using the definition of $w_{ij}$, we observe that if the data samples $x_i$ and $x_j$ are from different submanifolds, it encourages the output $y_i$ and $y_j$ to lie in different manifolds. Additionally, if $x_i$ and $x_j$ are from the same submanifold, it encourages the output $y_i$ and $y_i$ to lie in the same manifold. The regularizer helps in exploiting the information regarding inter- and intra-relations of the modes of the distribution of the real data, and couples $x_i$ and $y_i$  in a manifold learning fashion.

\section{Theoretical Analysis}
This section provides provability results and properties of MR-GAN\footnote{The proofs are provided in the supplementary materials.}. We first expalin assumptions that we make in our analysis, which are mild and widely used in the analysis of GANs (or DNNs) \cite{ruan2018reachability,arora2017generalization}.
\subsection{Assumptions}
\begin{as}\label{as_1}We make the following assumptions of $G_u$, $D_v$, and $\psi$.
	\begin{enumerate}[label=(\alph*)]
		\item $\forall  u,u^\prime \in \U, $ and any input $h$, $\|\g(h) - G_{u^\prime}(h)\| \le L\|u - u^\prime\|$.
		\item $\forall  u \in \U, $ and any input $h$ and $h^\prime$, $\|\g(h) -\g(h^\prime)\| \le L^\prime\|h - h^\prime\|$.
		\item The embedding function $\psi$ is $L_\psi$-Lipschitz smooth on manifold $\M$, i.e., $\|\nabla_\M \psi(y)-\nabla_\M\psi(x)\|^2 \le L_\psi^2 \|y-x\|^2$.
	\end{enumerate}	
\end{as}

Assumption~\ref{as_1}(a) means that $\g$ is $L$-Lipschitz with respect to its parameters, and we assume so for $\di$ as well. Note that this is distinct from the assumption that functions $\g, \di$ are Lipschitz (which we introduce next) which focuses on the change in function value when we change $x$, while keeping $u, v$ fixed. Assumption~\ref{as_1}(b) means that $\g$ is $L^\prime$-Lipschitz with respect to its input, and we assume so for $\di$ as well. It essentially means that a small variation in the input to the generator/discriminator does not cause a large variation in the output of the generator/discriminator.
Assumption~\ref{as_1}(c) assumes the smoothness of the embedding function in on manifold $\M$. The embedding function in practice could be an auto-encoder, which also satisfies this condition.

\begin{as}\label{as_2}We make the following assumptions about the measure function $\phi$.
	\begin{enumerate}[label=(\alph*)]
		\item $\forall x\in \M, \|\nabla_\M \phi(x)\| \le M$.
		\item The function $\phi$ is bounded in $[-\Delta, \Delta]$ in training.
		\item $\forall x, x^\prime \in \mathbb R$, $\|\phi(x) - \phi(x^\prime)\| \le L_\phi\|x - x^\prime\|$.
	\end{enumerate}	
\end{as}

Assumption~\ref{as_2}(a) is equivalent to the fact that the function $\phi$ has no geometrically step-sized property in function values.
The training of the original GAN and Wasserstein GAN uses $\phi(x)=\log(\delta+(1-\delta)x)$ and $\phi(x)=x$ respectively. Also, $\log(x)$ and $x$ are not bounded by nature when $x\in \mathbb R$. However, since $\phi$ takes input from $[0,1]$, Assumption~\ref{as_2}(b) is valid.
Assumption~\ref{as_2}(c) implies that the measure function $\phi$ is $L_\phi$-Lipschitz continuous.

\subsection{Analytical Results}
\textbf{Optimal Embedding Function.} The function $\psi$ embeds the data into a low-dimensional subspace, and thus it can prevent overfitting in the training phase. One can also use $\psi(x) = x$ to exploit the complete geometric information in the data. However, a small value of the regularizer parameter $\lambda$ is recommended in such a case to prevent overfitting, which might limit the impact of the regularizer.

One can imagine that different choices of the embedding functions can lead to different qualities of the generated data by MR-GAN. Hence, it is important to find the optimal form of the embedding function $\psi$. Since $\psi$ embeds the data into a low-dimensional subspace, an interesting case to study is to find the optimal one-dimensional form of $\psi$. As the regularized objective only takes effect in the training of the generator, we can write the joint optimization of finding the best generator and the embedding function $\psi$ in an empirical fashion as
\begin{align}\label{emp_cost_gen_embd}
\min_{u\in \U, \psi}  \frac{1}{m}\sum_{i=1}^m [\phi(1-\di(y_i))
+\lambda \|\nabla_\M (\psi(y_i)-\psi(x_i))\|^2].
\end{align}
We provide the result in the following theorem for such a case.
\begin{thm}\label{embedding}
	The optimal one-dimensional embedding function $\psi(x)$ exists and admits the following representation
	\begin{align}
	\psi(x)=\sum_{i=1}^{m}\alpha_i K(x_i,x),
	\end{align}
	where $K: \mathbb{R}^d \times \mathbb{R}^d \rightarrow \mathbb{R}$ is a Mercer kernel.
\end{thm}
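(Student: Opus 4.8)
The plan is to recognize the inner minimization over $\psi$ in \eqref{emp_cost_gen_embd} as a regularized empirical risk minimization over a reproducing kernel Hilbert space (RKHS) and then to run the standard representer‑theorem argument. First I would fix the generator $\g$ and the discriminator $\di$, so that the samples $y_i=\g(h_i)$ are determined and the term $\phi(1-\di(y_i))$ does not involve $\psi$; the $\psi$‑subproblem then reduces to minimizing $\tfrac{\lambda}{m}\sum_{i=1}^m\|\nabla_\M(\psi(y_i)-\psi(x_i))\|^2$, which, to be well posed over a function class, is taken together with an ambient RKHS penalty $\gamma\|\psi\|_{\mathcal{H}_K}^2$ (equivalently, $\psi$ is sought in a ball of $\mathcal{H}_K$) — this is the ingredient that also enforces the low‑dimensional, non‑overfitting behavior discussed above. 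I would then use the empirical form \eqref{mr-emp} of the manifold regularizer, in which $\|\nabla_\M(\psi(y)-\psi(x))\|^2$ is discretized by the graph‑Laplacian quadratic form $\tfrac{1}{m^2}\sum_{i,j}\|\psi(y_i)-\psi(x_i)-\psi(y_j)+\psi(x_j)\|^2 w_{ij}$; the key structural fact is that this quantity depends on $\psi$ only through its values on the finite point set $Z=\{x_1,\dots,x_m\}\cup\{y_1,\dots,y_m\}$.

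The core step is an orthogonal decomposition in $\mathcal{H}_K$. Let $S=\mathrm{span}\{K(z,\cdot):z\in Z\}$, a finite‑dimensional subspace, and write an arbitrary candidate as $\psi=\psi_S+\psi_\perp$ with $\psi_S\in S$ and $\psi_\perp\perp S$. By the reproducing property, $\psi(z)=\langle\psi,K(z,\cdot)\rangle_{\mathcal{H}_K}=\psi_S(z)$ for every $z\in Z$, so the discretized manifold‑regularization term is unchanged if $\psi$ is replaced by $\psi_S$, while $\|\psi\|_{\mathcal{H}_K}^2=\|\psi_S\|_{\mathcal{H}_K}^2+\|\psi_\perp\|_{\mathcal{H}_K}^2\ge\|\psi_S\|_{\mathcal{H}_K}^2$ with equality only when $\psi_\perp=0$. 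Hence any minimizer must lie in $S$, i.e.\ $\psi=\sum_{z\in Z}\alpha_z K(z,\cdot)$, which after collecting and relabeling the sample points yields exactly the claimed representation $\psi(x)=\sum_i\alpha_i K(x_i,x)$. For existence, I would restrict the objective to $S$, where it becomes a continuous function of the coefficient vector — a nonnegative quadratic form from the manifold term plus the coercive term $\gamma\,\alpha^{\!\top}\mathbf{K}\alpha$ with $\mathbf{K}$ the Gram matrix of $K$ on $Z$ — hence continuous and coercive on a finite‑dimensional space, so a minimizer exists; convexity of the functional moreover pins the minimizer down up to the null space of the quadratic form.

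The step I expect to be the main obstacle is the reduction from the integral regularizer $\int_\M\|\nabla_\M(\psi(y)-\psi(x))\|^2\,dP_x$ in \eqref{objective_function} to a functional that depends on $\psi$ only through $\{\psi(z):z\in Z\}$: without this, the orthogonal component $\psi_\perp$ could in principle affect the intrinsic gradient even though it does not affect pointwise evaluations, and the decomposition argument would fail. This is exactly where the empirical graph‑Laplacian surrogate in \eqref{mr-emp} (together with the understanding that $\nabla_\M$ is taken in the graph sense, or that $\psi$ is searched within the data‑spanned function class) must be invoked; once one commits to that formulation, the dependence through $\{\psi(z)\}$ is automatic and the remainder is routine RKHS bookkeeping. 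A secondary technical point worth making explicit is the ambient penalty / norm constraint needed for the minimizer over the full RKHS to exist and be finite, since it is only implicit in \eqref{emp_cost_gen_embd}.
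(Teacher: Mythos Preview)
Your proposal follows the same representer-theorem strategy as the paper: decompose $\psi$ orthogonally in the RKHS, use the reproducing property to show the data-dependent part of the objective is unaffected by the orthogonal component, and conclude $\psi_\perp=0$. Two substantive differences are worth flagging.

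First, you project onto $S=\mathrm{span}\{K(z,\cdot):z\in Z\}$ with $Z=\{x_1,\dots,x_m\}\cup\{y_1,\dots,y_m\}$, which gives an expansion with $2m$ kernel centers; your ``collecting and relabeling'' step does not actually reduce this to the $m$-term form $\sum_{i=1}^m\alpha_i K(x_i,x)$ asserted in the theorem. The paper instead decomposes with respect to $\mathrm{span}\{K(x_i,\cdot)\}_{i=1}^m$ only, which directly yields the stated $m$-term representation; the price is that $\psi_\perp(y_j)$ need not vanish, and the paper deals with this by expanding the Laplacian quadratic form $FLF^T$ and arguing that the $\psi_\perp$ contributions can only raise the objective. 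Your version is the textbook representer theorem and is cleaner, but it proves a slightly weaker conclusion than the one stated.

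Second, you add an ambient penalty $\gamma\|\psi\|_{\mathcal{H}_K}^2$ that does not appear in \eqref{emp_cost_gen_embd}. This is what makes your orthogonal-decomposition step decisive (and what gives you coercivity for existence), but it alters the optimization problem. The paper does not introduce such a term and works with the graph-Laplacian quadratic alone; its existence argument is correspondingly less explicit. So: same route, but the paper's choice of span matches the theorem's $m$-term claim, while your choice of span plus the extra RKHS penalty buys rigor at the cost of a slightly different problem and a $2m$-term expansion.
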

Here, we still need to find the coefficients $\alpha_i$ for the finite dimensional space. A good method is indicated in \cite{belkin2006manifold}. First, one fixes the type of kernel function $K$ and optimizes the problem \eqref{emp_cost_gen_embd} with respect to $\alpha_i$. Then, the optimal $\alpha_i$ can be easily found using simple first-order derivative methods. 

For the embedding function $\psi$ which gives the embedding with a dimension that is higher than $1$, one can use auto-encoders to encode the high-dimensional data into a low-dimensional subspace.

\textbf{Generalization.}
Since we can only access (and optimize) the empirical distance between the distributions in practice, it becomes important to ensure that this empirical distance is close to the true distance for the generated and real distributions. As the training algorithm is supposed to run in polynomial time, one has to estimate the true distance using only polynomial number of samples \cite{arora2017generalization}. Indeed, it is shown in \cite{arora2017generalization} that if we do not have enough samples for training the GAN, $1)$ the distance between the empirical distributions can be close to the maximum possible distance even if the samples are drawn from the same distribution; $2)$ even if the generator happens to find the real distribution, the
distance between the empirical distributions can still be large and the generator has no idea that
it has succeeded.

Thus, it is crucial to answer the following question: can MR-GAN generalize well and approximate the true distance between the generated and real distributions with a reasonable number of samples? Our following result answers this in affirmative. 
\begin{thm}\label{thm_realizability}
	Let $\hat{\D}_{real}$ and $\hat{\D}_\g$ be empirical versions with at least $m$ samples each for MR-GAN. There is a universal constant $C$ such that when $m \ge \frac{Cp\log(LL_\phi p/\epsilon)(\Delta+4\lambda M^2)^2}{\epsilon^2} $, we have with probability at least $1-\exp (1-p)$: 
	\begin{align}
	|F(u,v)-\hat F(u,v)|\le \epsilon.
	\end{align}
\end{thm}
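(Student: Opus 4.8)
The plan is to follow the now-standard $\epsilon$-net-plus-concentration argument for GAN generalization (as in \cite{arora2017generalization}), adapted to the extra manifold regularizer. The three ingredients are: (i) for a fixed parameter pair $(u,v)$, $\hat F(u,v)$ concentrates sharply around $F(u,v)$; (ii) both $F$ and $\hat F$ are Lipschitz in $(u,v)$ with a constant that is polynomial in the problem constants; and (iii) a union bound over a fine net of the bounded parameter space $\U\times\V$ upgrades the pointwise bound to a uniform one, with the net size producing the $p\log(\cdot/\epsilon)$ factor in the sample complexity.

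For the pointwise step I would first check that every summand of $\hat F(u,v)$ is bounded by $O(\Delta+\lambda M^2)$: the terms $\phi(\di(x_i))$ and $\phi(1-\di(y_i))$ are bounded by $\Delta$ by Assumption~\ref{as_2}(b), and, using Assumption~\ref{as_1}(c) together with the boundedness of $\nabla_\M\psi$ coming from Assumption~\ref{as_2}(a) (and $w_{ij}\le 1$), each regularizer term is $O(M^2)$. The subtlety is that the regularizer is a degree-two V-statistic $\tfrac{1}{m^2}\sum_{i,j}$ rather than a plain empirical average, so I would use McDiarmid's bounded-differences inequality instead of Hoeffding: replacing one sample pair $(x_i,y_i)$ changes the first sum by $O(\Delta/m)$ and the $2m$ regularizer terms touching index $i$ by $O(\lambda M^2/m)$ in total, so the bounded-difference constants are $c_i=O((\Delta+\lambda M^2)/m)$ and McDiarmid gives
\begin{align}
\Pr\!\left[\,|\hat F(u,v)-\E\hat F(u,v)|\ge \tfrac{\epsilon}{3}\,\right]\le 2\exp\!\left(-\frac{\Omega(m\epsilon^2)}{(\Delta+4\lambda M^2)^2}\right).\nonumber
\end{align}
I would also note that $\E\hat F(u,v)=F(u,v)$ up to an $O(1/m)$ bias from the vanishing diagonal terms and from identifying the graph-Laplacian regularizer with the manifold gradient term (as already done in the Preliminaries); this bias is absorbed into $\epsilon$.

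For the Lipschitz-in-parameters step I would chain the assumptions: $|\phi(\di(x))-\phi(D_{v'}(x))|\le L_\phi\|\di(x)-D_{v'}(x)\|\le L_\phi L\|v-v'\|$ by Assumption~\ref{as_2}(c) and the (assumed) parameter-Lipschitzness of $\di$; changing $u$ moves $y=G_u(h)$ by $L\|u-u'\|$ (Assumption~\ref{as_1}(a)), which moves $\phi(1-\di(y))$ by $L_\phi L' L\|u-u'\|$ (chaining with Assumption~\ref{as_1}(b)) and moves the regularizer term by $O(L_\psi^2 M L)\|u-u'\|$ (Assumption~\ref{as_1}(c) plus boundedness). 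Hence $F$ and $\hat F$ are $L_{\mathrm{obj}}$-Lipschitz on $\U\times\V$ with $L_{\mathrm{obj}}=\mathrm{poly}(L,L',L_\phi,L_\psi,M,\lambda)$, and this holds for $\hat F$ deterministically given the samples. Since $\U\times\V$ lies in the unit ball of $\RR^{2p}$, it has a $\delta$-net $\mathcal N$ with $|\mathcal N|\le(3/\delta)^{2p}$; taking $\delta=\epsilon/(3L_{\mathrm{obj}})$, applying the pointwise bound at each net point, and union bounding, the total failure probability is at most $|\mathcal N|\cdot 2\exp(-\Omega(m\epsilon^2)/(\Delta+4\lambda M^2)^2)$, which is $\le\exp(1-p)$ exactly when $m\ge \frac{Cp\log(LL_\phi p/\epsilon)(\Delta+4\lambda M^2)^2}{\epsilon^2}$ for a suitable universal $C$ (the logarithm coming from $\log|\mathcal N|=2p\log(9L_{\mathrm{obj}}/\epsilon)$). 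On this good event, for arbitrary $(u,v)$ pick the nearest net point $(u_0,v_0)$ and conclude by the triangle inequality $|F(u,v)-\hat F(u,v)|\le|F(u,v)-F(u_0,v_0)|+|F(u_0,v_0)-\hat F(u_0,v_0)|+|\hat F(u_0,v_0)-\hat F(u,v)|\le \tfrac{\epsilon}{3}+\tfrac{\epsilon}{3}+\tfrac{\epsilon}{3}=\epsilon$.

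I expect the main obstacle to be the regularizer term: besides the bookkeeping needed to bound its magnitude and its Lipschitz constant (which requires carefully chaining Assumptions~\ref{as_1}(a)--(c) and~\ref{as_2}(a) and a boundedness argument for $\psi$), its pairwise V-statistic structure breaks the independence a one-line Hoeffding bound would exploit, so verifying that the McDiarmid bounded-difference constant is still $O(1/m)$ despite $m^2$ summands is the technical heart of the pointwise estimate; the net/union-bound machinery and the Lipschitz chaining are then routine.
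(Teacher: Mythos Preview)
Your proposal is correct and follows the same overall strategy as the paper's proof: build an $\epsilon$-net on the parameter space, apply a concentration inequality pointwise on the net, union-bound, and extend to all parameters via Lipschitz continuity. The sample complexity you obtain matches.

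There are two mild differences worth noting. First, the paper nets only over $\V$ (treating $u$, and hence $\D_\g$, as fixed), splits the objective as $f(\D_{real},\D_\g,v)=\E[\phi(1-\di(y))+\lambda\|\nabla_\M(\psi(y)-\psi(x))\|^2]$ and $\E[\phi(\di(x))]$, and applies Hoeffding directly to each piece with range $2\Delta+8\lambda M^2$. You net over $\U\times\V$ and treat the whole objective at once; this costs only a constant in the exponent and yields a stronger uniform-in-$(u,v)$ statement. Second, you correctly flag that the empirical regularizer $\tfrac{1}{m^2}\sum_{i,j}(\cdot)w_{ij}$ is a V-statistic rather than an i.i.d.\ average and therefore switch to McDiarmid with bounded-difference constants $O((\Delta+\lambda M^2)/m)$. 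The paper simply invokes Hoeffding as if the regularizer were a single-index average; your treatment is more careful on this point and buys a cleaner justification of the same concentration rate, while the paper's version is terser but glosses over the pairwise structure. Either route lands on the same bound once constants are absorbed into $C$.
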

{{The above theorem shows that if a sufficient amount of training data is available, the distance between the empirical objective function $\hat F(u,v)$ and the population objective function $F(u,v)$ is sufficiently small.} This result is important as it guarantees that the analysis of MR-GAN conducted based on the population objective function can be well generalized to the empirical form. Thus, it ensures that the theoretical guarantees of MR-GAN can be well satisfied in practice.}

\textbf{Existence of Equilibrium.} The training of GAN has the goal to end up with an equilibrium for the min-max game between the generator and the discriminator. That is, the discriminator outputs $1/2$ for both the cases where the input is the real data or the generated data, which essentially means that the discriminator guesses randomly and cannot distinguish between real and generated data. On the other hand, the generator cannot exploit the output of the discriminator by back-propagation and cannot update itself and improve the quality of the generated data anymore. Therefore, ensuring the existence of the equilibrium of a certain GAN architecture is crucial before training process starts. It is important that we have provable results showing the existence of equilibrium for the proposed MR-GAN. Interestingly, if we change the regularizer in the objective function \eqref{problem} from $\|\nabla_\M (\psi(y_i)-\psi(x_i))\|^2$ to $\|\nabla_\M \psi(y_i)\|^2$, which is used in conventional manifold learning problems, our analysis indicates that the equilibrium cannot be guaranteed. To show the existence of equilibrium for the proposed MR-GAN architecture, we use the following definition of the $\epsilon$-approximate equilibrium.

\begin{definition}[\cite{arora2017generalization}]
	A pair of mixed strategies $(\mathcal S_u,\mathcal S_v)$ is an $\epsilon$-approximate equilibrium, if for some value $V$
	\begin{align}
	&\forall v\in\V, \mathop\E_{u\sim \mathcal{S}_u}F(u,v) \le V+\epsilon;\\
	& \forall u\in\U, \mathop\E_{v\sim \mathcal{S}_v}F(u,v) \ge V-\epsilon.
	\end{align}
	If the strategies $\mathcal S_u,\mathcal{S}_v$ are pure strategies, then this pair is called an $\epsilon$-approximate pure equilibrium.
\end{definition}
\begin{thm}\label{thm_equilibrium}
	If the generator can approximate any point mass by $\Eb_{h\sim \D_h}[\|\g(h)-x\|] \le \epsilon$, there is a universal constant $C > 0$ such that for any $\epsilon$, there exists $T=\frac{C\Delta^2 p \log (LL^\prime L_\phi p/\epsilon)}{\epsilon^2}$ generators $G_{u1}, \ldots, G_{uT}$. Let $\mathcal{S}_u$ be a uniform distribution on $u_i$, and $D$ is a discriminator that outputs $1/2$, then $(\mathcal S_u, D)$ is an $\epsilon$-approximate equilibrium for MR-GAN.
\end{thm}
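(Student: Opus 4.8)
The plan is to adapt the mixture-of-generators construction of \cite{arora2017generalization} and to show that the (nonnegative) manifold regularizer does not disturb the equilibrium of the unregularized game, whose value is $2\phi(1/2)$. First I would draw $x_1,\dots,x_T$ i.i.d.\ from $\D_{real}$ and, using the hypothesis $\Eb_{h\sim\D_h}[\|\g(h)-x\|]\le\epsilon$, choose parameters $u_1,\dots,u_T\in\U$ so that each $G_{u_i}$ approximates the point mass at $x_i$ in this sense. Let $\mathcal S_u$ be the uniform distribution on $\{u_1,\dots,u_T\}$, let $D$ output $1/2$, and take the candidate value $V=2\phi(1/2)$. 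The structural fact driving the argument is that, for the purpose of evaluating any test function that is Lipschitz in its input, the generated distribution under $\mathcal S_u$ is within $O(L^\prime L_\phi\epsilon)$ of the empirical distribution $\widehat\D_{real}$ on $\{x_i\}$ (by Assumptions~\ref{as_1}(b) and \ref{as_2}(c)), while $\widehat\D_{real}$ is within $O(\epsilon)$ of $\D_{real}$ once $T$ is large enough.

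With this in place the two equilibrium inequalities are checked as follows. For the lower bound, for every $u\in\U$ we have $F(u,D)=2\phi(1/2)+\lambda\,\Eb_{x\sim\D_{real}}\Eb_{y\sim\D_{\g}}\|\nabla_\M(\psi(y)-\psi(x))\|^2\ge 2\phi(1/2)=V\ge V-\epsilon$, immediate from nonnegativity of the regularizer, which handles the condition $\forall u,\ F(u,D)\ge V-\epsilon$. For the upper bound I would split $\Eb_{u\sim\mathcal S_u}F(u,v)$ into the two $\phi$-terms and the regularizer term. The $\phi$-terms equal $\Eb_{x\sim\D_{real}}[\phi(\di(x))]+\Eb_{u\sim\mathcal S_u}\Eb_{y\sim\D_{\g}}[\phi(1-\di(y))]$; replacing the second expectation by $\Eb_{x\sim\D_{real}}[\phi(1-\di(x))]$ at the cost of an additive $O(\epsilon)$ (point-mass approximation plus concentration, uniformly over $v$) and then using concavity of $\phi$, namely $\tfrac12\phi(a)+\tfrac12\phi(1-a)\le\phi(1/2)$, bounds them by $2\phi(1/2)+O(\epsilon)$. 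The regularizer term is $O(\epsilon)$ for our construction because the generated sample paired with $x_i$ is within $\epsilon$ of $x_i$ in expectation, so by Assumption~\ref{as_1}(c) the quadratic regularizer of \eqref{mr-emp} is $O(\lambda L_\psi^2\epsilon)$; rescaling the target accuracy absorbs this into $O(\epsilon)$. Hence $\Eb_{u\sim\mathcal S_u}F(u,v)\le V+\epsilon$ for all $v$, completing the verification.

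The main obstacle is making the ``$\D_{\mathcal S_u}\approx\D_{real}$'' step hold \emph{uniformly over all} $v\in\V$, which is exactly what forces the stated $T=\Theta\big(\Delta^2 p\log(LL^\prime L_\phi p/\epsilon)/\epsilon^2\big)$. I would cover $\V\subseteq\RR^p$ by an $\epsilon$-net of radius $\sim\epsilon/(LL^\prime L_\phi)$, so that by Assumption~\ref{as_1}(a) (Lipschitzness of $\di$ in its parameters) and Lipschitzness of $\phi$ it suffices to control the finitely many net points; the net has cardinality $\big(LL^\prime L_\phi p/\epsilon\big)^{O(p)}$. For each fixed net point $v$, $\tfrac1T\sum_i\phi(1-\di(x_i))$ concentrates around its mean with sub-Gaussian tails of width $\Delta/\sqrt T$ since $\phi$ takes values in $[-\Delta,\Delta]$ (Assumption~\ref{as_2}(b)), and a union bound over the net set equal to $\exp(1-p)$ yields the claimed $T$ up to universal constants. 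I expect the delicate bookkeeping to be checking that the point-mass approximation errors and the regularizer term really are lower order, so that $\lambda$, $M$ and $L_\psi$ need not appear in $T$; once that is granted, the remainder is the net-plus-Hoeffding argument of \cite{arora2017generalization} adapted to the extra additive regularizer, together with the single use of concavity of $\phi$ to pin the value at $2\phi(1/2)$.
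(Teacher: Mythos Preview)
Your proposal is correct and follows the same backbone as the paper's proof: both adapt the Arora et al.\ machinery (an $\epsilon/(LL'L_\phi)$-net on $\V$, Hoeffding concentration with range governed by $\Delta$, Lipschitz extension from the net to all of $\V$, and concavity of $\phi$ to pin the value at $2\phi(1/2)$). The one structural difference is how the $T$ generators are produced. The paper proceeds in two stages: it first shows the mixed-strategy value of the game equals $2\phi(1/2)$ via a limiting construction $\D_\alpha$ (sample $x\sim\D_{real}$, use a generator approximating $\delta_x$ to within $\alpha$, and send $\alpha\to 0$), bounding the regularizer by $\lambda L_\psi^2\alpha^2$ through Assumption~\ref{as_1}(c); it then invokes an optimal mixed strategy $\mathcal{S}'_u$ and draws $u_1,\dots,u_T$ i.i.d.\ from it, so that the Hoeffding step needs only boundedness of $F$ and the value inequality is inherited from optimality of $\mathcal{S}'_u$. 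You instead collapse these two stages into one, taking each $u_i$ to be the point-mass approximator of an i.i.d.\ real sample $x_i$ and bounding each piece of $\E_{u\sim\mathcal{S}_u}F(u,v)$ by hand. Your route is more concrete and sidesteps any appeal to an abstract minimax optimal strategy; the paper's route pushes the regularizer control into the limiting value computation rather than into the finite-mixture bound. Both arrive at the same $T$, and the ``delicate bookkeeping'' you flag---that the regularizer contribution is lower order so that $\lambda,L_\psi,M$ do not enter $T$---is precisely the paper's $\lambda L_\psi^2\alpha^2\to 0$ step transplanted to the finite construction.
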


Note that in the above result, the generator uses mixed strategies, which means that the generated data comes from a mixture of generators. One can add an output layer of ReLU activation functions to the generators to construct an integrated neural network of the generator, and the output is uniformly distributed over the results from the $T$ generators in the theorem. One possible construction can be found in the Lemma $4$ in \cite{arora2017generalization}. 

\textbf{Stable Training.}
From both the theoretical and the practical perspectives, the training of GAN remains a challenging problem, one of which is the issue of instability in optimizing GANs. It is presented in \cite{nagarajan2017gradient} that the training dynamics in ``(stochastic) gradient descent'' form of GANs optimization can be well analyzed by the method of nonlinear differential equations (ODE), thus providing a characterization of ``stability'' of GAN training. It is important to show that MR-GAN also falls into this general framework to characterize the training dynamics, and to show that the proposed MR-GAN can stabilize the training process.

Assuming that the generator and discriminator networks are parameterized by the sets of parameters,
$u$ and $v$, respectively, we investigate the problem of analyzing stability of approaches based on stochastic gradient descent to solve~\eqref{problem}. That is, we take simultaneous gradient steps in both $u$ and $v$.

All our conditions
are imposed on both $(u^\ast, v^\ast)$ and all equilibrium points in a small neighborhood around it.
Given the above consideration, our focus is on proving the stability of the dynamical system around
equilibrium points, i.e., points $\theta^\ast$ for which $h(\theta^\ast)=0, h(\theta)=\nabla F(\theta)$. 
We now discuss conditions under which we can guarantee exponential stability, which is originally defined for a dynamic system as follows.
\begin{definition}[\textbf{Stability}]
	(\cite{khalil1996noninear} Consider a system consisting of variables $\theta\in \mathbb{R}^n$ whose time derivative is defined by $h(\theta)$ as
	\begin{align}\label{ode}
	h(\theta)=\nabla F(\theta).
	\end{align}
	Let $\theta(t)$ denote the state of the system at some time $t$. Then an equilibrium point of the system in Equation \eqref{ode} is 
	\begin{itemize}
		\item stable if for each $\epsilon>0$, there is $\delta=\delta(\epsilon)>0$ such that
		\begin{align}
		\|\theta(0)\|\le \delta, \|\theta(t)\|\le \epsilon, \forall t\ge 0.
		\end{align}
		
		\item asymptotically stable if it is stable and $\delta >0$ can be chosen such that
		\begin{align}
		\|\theta(0)\|\le \delta, \lim_{t\rightarrow \infty } \theta(t) =0.
		\end{align}
		
		\item
		exponentially stable if it is asymptotically stable and $\delta, k, \lambda >0$ can be chosen such that 
		\begin{align}
		\|\theta(0)\|\le \delta, \|\theta(t)\|\le k \|\theta(0)\|\exp(-\lambda t).
		\end{align}
	\end{itemize}
\end{definition}

\begin{figure*}
    \centering
     \includegraphics[width=0.95\textwidth]{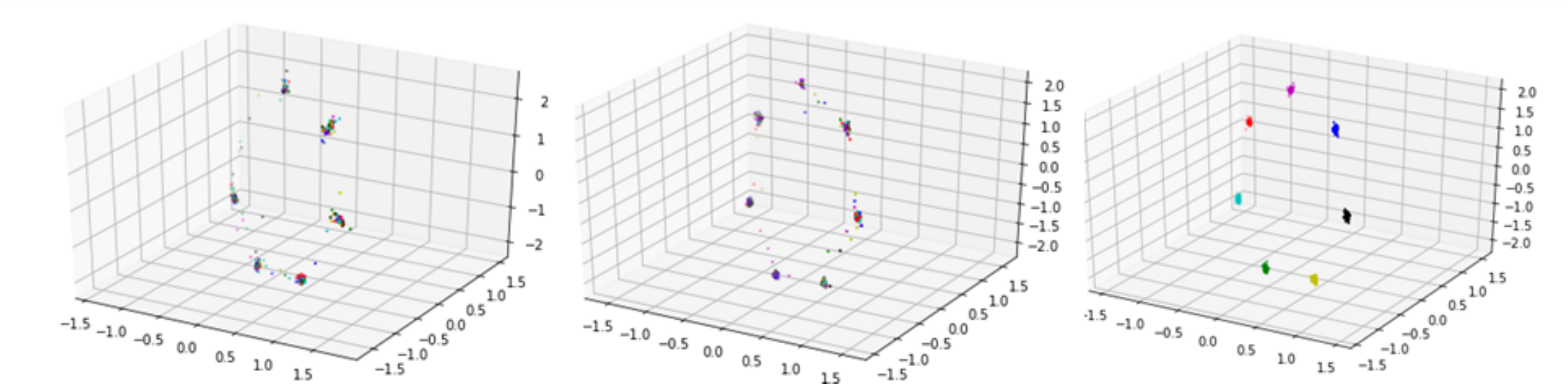}
    \caption{MR-GAN avoids the mode collapse problem and generalizes better on a toy 2D mixture of Gaussians
dataset in a 3D ambient space. (a) original GAN, (b) MR-GAN, (c) Ground truth.} 
    \label{fig:Gaussian_Mixture}
\end{figure*}

Specifically, we invoke the well known linearization theorem \cite{khalil1996noninear} analyzed for GANs training dynamics \cite{nagarajan2017gradient}, which states that if the Jacobian of the dynamical
system $\mathbf{J} = \partial h(\theta)/\partial \theta|\theta=\theta^\ast$ evaluated at an equilibrium point is Hurwitz (which has all strictly negative
eigenvalues, $Re(\lambda_i(\mathbf J)) < 0, \forall i = 1, \ldots , n),$ then the optimization of the GAN system training will converge to $\theta^\ast$	for some non-empty region around $\theta^\ast$, at an exponential rate. This means that the system is locally asymptotically stable, or more precisely, locally exponentially stable.
Thus, an important contribution here is a proof of the following fact: under some
conditions, the Jacobian of the dynamical system given by the proposed GAN update is a Hurwitz matrix at equilibrium. For simplicity, we denote the equilibrium point in the min-max game of the GAN training by $(u^\ast, v^\ast)$, which are the parameter sets of the discriminator and the generator at the equilibrium points. Recall that
\begin{align}
F(u,v)= \mathop{\E}_{x\sim \D_{real}}&\mathop{\E}_{y\sim \D_{\g}}[\phi \di(x)+ \phi(1-\di(y))\nn\\&+\lambda \|\nabla_\M (\psi(y)-\psi(x))\|^2].
\end{align}
The gradient steps in both $u$ and $v$ are taken simultaneously, resulting in the following gradient differential equations
\begin{align}\label{updates}
\dot{u} = \nabla_u F(u,v), \ \ \ \dot{v} = \nabla_v F(u,v).
\end{align}

\begin{thm}\label{thm_stable}
	The dynamical system defined by the MR-GAN objective in Equation \eqref{problem} and the updates in
	Equation \eqref{updates} is locally exponentially stable with respect to an equilibrium point $(u^\ast, v^\ast)$.
\end{thm}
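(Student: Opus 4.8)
The plan is to place the MR-GAN gradient dynamics \eqref{updates} inside the linearization framework of \cite{khalil1996noninear} used for GANs in \cite{nagarajan2017gradient}, and to show that the manifold regularizer contributes only a positive semidefinite perturbation to the generator block of the Jacobian at the equilibrium, so that the Hurwitz property of the unregularized Jacobian is inherited (and, in fact, strengthened).

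First I would split $F(u,v)=F_0(u,v)+\lambda R(u)$, where $F_0(u,v)=\E_{x\sim\D_{real}}\E_{y\sim\D_\g}[\phi(\di(x))+\phi(1-\di(y))]$ is the ordinary GAN objective and $R(u)=\E_{x\sim\D_{real}}\E_{y\sim\D_\g}\|\nabla_\M(\psi(y)-\psi(x))\|^2$. The structural point is that $\D_{real}$ and $\psi$ are fixed, so $R$ depends on the parameters only through $u$. Hence $\nabla^2_{uv}F=\nabla^2_{uv}F_0$, $\nabla^2_{vv}F=\nabla^2_{vv}F_0$, while $\nabla^2_{uu}F=\nabla^2_{uu}F_0+\lambda\nabla^2_{uu}R$, so that the Jacobian of the system at any point has the saddle form
\[
\J=\begin{pmatrix}-\nabla^2_{uu}F_0-\lambda\nabla^2_{uu}R & -\nabla^2_{uv}F_0\\[3pt] \nabla^2_{vu}F_0 & \nabla^2_{vv}F_0\end{pmatrix}=\J_0-\lambda\begin{pmatrix}\nabla^2_{uu}R&0\\ 0&0\end{pmatrix},
\]
with $\J_0$ the Jacobian of the unregularized dynamics.

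Second I would check that $(u^\ast,v^\ast)$ is still an equilibrium and that the regularizer enters the Jacobian there as a PSD term. At the equilibrium the discriminator outputs $1/2$ and the generator reproduces $\D_{real}$, so the discrepancy $\psi(y)-\psi(x)$ is constant along $\M$ and $\|\nabla_\M(\psi(y)-\psi(x))\|^2$ vanishes; thus $R$ attains its global minimum value $0$ at $u^\ast$. This gives $\nabla_u R(u^\ast)=0$, so $\nabla F(u^\ast,v^\ast)=\nabla F_0(u^\ast,v^\ast)=0$ and $(u^\ast,v^\ast)$ remains an equilibrium, and $\nabla^2_{uu}R(u^\ast)\succeq 0$ since $u^\ast$ minimizes the nonnegative $R$. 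Then I would invoke the base stability result of \cite{nagarajan2017gradient}: under the standing assumptions (discriminator locally strongly concave, $\nabla^2_{vv}F_0(u^\ast,v^\ast)\prec0$, and the coupling block $\nabla^2_{uv}F_0$ nondegenerate on the relevant subspace) the Jacobian $\J_0$ is Hurwitz. It remains to show that subtracting $\lambda\,\mathrm{diag}(\nabla^2_{uu}R(u^\ast),0)$, with $\nabla^2_{uu}R(u^\ast)\succeq0$, keeps every eigenvalue of $\J$ in the open left half-plane. I would do this via a Lyapunov certificate: pick $P\succ0$ with $P\J_0+\J_0^\top P\prec0$, arranged (as the Nagarajan--Kolter construction allows) to be block diagonal in the $(u,v)$ split; then the perturbation enters $P\J+\J^\top P$ only as the symmetrized term $-2\lambda\,P_u\nabla^2_{uu}R(u^\ast)\preceq0$, so $P\J+\J^\top P\prec0$ and $\J$ is Hurwitz. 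The linearization theorem then yields local exponential stability of \eqref{problem} under \eqref{updates} at $(u^\ast,v^\ast)$, with a rate at least that of the unregularized system.

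The step I expect to be the main obstacle is exactly this last one: a positive semidefinite perturbation of a single block of a non-symmetric Hurwitz matrix need not preserve the Hurwitz property in general, and eigenvalue continuity alone does not exclude a crossing of the imaginary axis. The argument must exploit the specific saddle block structure and choose the Lyapunov matrix compatibly with the generator block that is being perturbed --- equivalently, the stability proof of \cite{nagarajan2017gradient} must be re-derived in a form that is robust to a positive semidefinite bump of $\nabla^2_{uu}F_0$. Once such a structured certificate is available, the remaining pieces --- the decomposition of $F$, and the vanishing of $\nabla_u R$ together with $\nabla^2_{uu}R(u^\ast)\succeq0$ at the equilibrium --- follow routinely from Assumption~\ref{as_1} and Assumption~\ref{as_2}.
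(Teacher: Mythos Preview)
Your route is genuinely different from the paper's. The paper does not decompose $F=F_0+\lambda R$; instead it computes each block of $\J$ at $(u^\ast,v^\ast)$ directly, obtaining $\J_{vv}=2\phi''(\tfrac12)\,\E_x[\nabla_vD_{v^\ast}(x)\nabla_v^TD_{v^\ast}(x)]$, $\J_{uv}=-\J_{vu}^T$, and---crucially---$\J_{uu}=0$, with the regularizer's contribution to $\J_{uu}$ made to disappear via a limit argument that restricts the $y$-integral to $\mathrm{supp}(\D_{real})$ and uses $D_{v^\ast}\equiv\tfrac12$ there. It then invokes external lemmas on this block form to conclude Hurwitz. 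Your decomposition is cleaner and more honest about where the regularizer enters: you are in effect proving a stronger statement, allowing $\nabla_{uu}^2R(u^\ast)\succeq0$ rather than insisting it vanishes, and you separate the base Hurwitz property (delegated to \cite{nagarajan2017gradient}) from the effect of the perturbation.

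There are, however, two real gaps. First, the claim that $R$ attains its minimum value $0$ at $u^\ast$ because ``$\psi(y)-\psi(x)$ is constant along $\M$'' does not follow from $\D_{\g}=\D_{real}$: in \eqref{problem} the samples $x$ and $y$ are independent, so even when the two laws coincide, $\psi(y)-\psi(x)$ is a nondegenerate random function on $\M$ and its manifold gradient need not vanish. Without $\nabla_uR(u^\ast)=0$ you cannot even assert that $(u^\ast,v^\ast)$ is an equilibrium of the regularized dynamics, and without minimality you lose $\nabla_{uu}^2R(u^\ast)\succeq0$. The paper's own proof is no more careful here---it simply drops the regularizer from the $\J_{uu}$ calculation---so the issue is inherited from the formulation, but your argument as written does not close it either.

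Second, your Lyapunov step has an algebraic slip. With block-diagonal $P=\mathrm{diag}(P_u,P_v)$, the perturbation of $P\J+\J^\top P$ in the $uu$ block is $-\lambda(P_uH+HP_u)$ with $H=\nabla_{uu}^2R(u^\ast)$, not $-2\lambda P_uH$; and $P_uH+HP_u$ is not positive semidefinite for arbitrary PSD $P_u,H$ unless they commute. So the certificate you propose only works if $P_u$ can be taken as a scalar multiple of the identity (or otherwise compatible with $H$), which you would have to verify inside the Nagarajan--Kolter construction rather than assume. You rightly flag this as the main obstacle, but the concrete inequality you wrote is not the correct one.
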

This shows that the proposed MR-GAN is locally exponentially stable. That is, for some region around an equilibrium of the updates, the gradient updates will converge
to this equilibrium at an exponential rate. As an interesting note, Wasserstein GANs~\cite{arjovsky2017wasserstein} are not even asymptotically stable \cite{nagarajan2017gradient}. However, adding the manifold regularization term makes it locally exponentially stable.
Different from the analysis in \cite{nagarajan2017gradient} and the proposed GAN training architecture therein, we can guarantee the existence of the equilibrium, around which the training has stable convergence. However, the analysis in \cite{nagarajan2017gradient} does not guarantee so, as it only demonstrates convergence if there do exist points that satisfy certain criteria. Thus, we provide a systematic analysis that proves the existence of the equilibrium and the stable convergence.

\section{Experiments}
In this section, we corroborate our theories using synthetic data
and real data sets of increasing complexity. We present the performance comparison to widely used benchmark GAN architectures\footnote{Additional results are provided in supplementary materials.}. 

We employ the recently proposed {\emph{Geometry Score} (GS)} metric \cite{khrulkov2018geometry} for assessing the quality of generated samples and detecting various levels of failure models. GS compares the topological properties of the underlying real data manifold and the generated
one, which provides both qualitative and quantitative means for evaluation the results generated by GANs. It was shown in \cite{khrulkov2018geometry} that GS is more expressive in capturing various failure modes of GANs compared to its conventional counterparts, such as, \emph{Inception Score}~\cite{salimans2016improved}, and \emph{Fr\'echet Inception Distance}~\cite{heusel2017gans}. A lower value of GS indicates a better match between the generated data and the real data.

\subsection{Synthetic Data}
To illustrate the impact of the proposed regularization in the training of the generator, we train the original GAN architecture \cite{goodfellow2014generative} (using Adam Optimizer with a learning rate of $\gamma=1e-3$ for both networks) on a 2D mixture of $8$ Gaussians evenly arranged in a circle. However, the circle of the Gaussian mixture lies in a hyperplane in a 3D space. We show this dataset model in the third subfigure in Figure~\ref{fig:Gaussian_Mixture}. Therefore, the generator has to search for 2D submanifolds in a 3D space. The first two subfigures in Figure~\ref{fig:Gaussian_Mixture} show the GAN training result of this model after $10,000$ training iterations. We present the result of the original GAN in the first subfigure and that of MR-GAN in the second. For the MR-GAN, we set the kernel  scale parameter $\rho=128$ and the regularization parameter $\lambda=0.5$. We can clearly observe from the comparison in the figure that the original GAN misses one of the $8$ modes and the problem of mode collapse happens, and the proposed MR-GAN learns to evenly spread the probability mass and converges to all the $8$ modes without any mode collapse. Second, we can see from the figure that the data mass generated by the proposed GAN architecture lies heavily within the mode, and the probability mass resembles the real probability in the third subfigure very well. However, the data mass generated by the original GAN scatters around the mode, compared to the result generated by MR-GAN. Furthermore, the results generated by the original GAN have a GS of $0.909$ and the results generated by the proposed MR-GAN architecture have a GS of $0.442$, which is an improvement of $51.4\%$.

\subsection{MNIST Dataset}
In our second experiment, we test our approach on the {{MNIST dataset~\cite{mnist_data}}} of handwritten digits. We compare the proposed GAN architecture based on the recently proposed model, i.e., Wasserstein GAN (WGAN)~\cite{arjovsky2017wasserstein}. We use the RMSProp Optimizer with a learning rate of $\gamma=1e-4$ for both networks. In the following tables, we quantify the performance in terms of the GS for the proposed MR-WGAN architecture with different values of kernel scale parameter $\rho$ and with different values of regularization parameter $\lambda$, after 300K training iterations. 

First, we set the kernel scale parameter $\rho$ to $6.4$ and vary the value of the regularization parameter $\lambda$ from $0.05$ to $0.4$, as shown in Table~\ref{rho64}. Note that the results of WGAN yields GS of $0.414$, which is shown with $\lambda=0$. When the value of $\lambda$ is small, we observe the improvement in GS for MR-WGAN compared to the results of WGAN. When $\lambda=0.2$, the proposed MR-WGAN has results with GS $=0.384$, which provides an improvement of $7.25\%$ in GS. We also provide various GS results when $\rho=10$ in Table~\ref{rho10}. When $\lambda$ is small, we again observe improvement. When $\lambda=0.01$, we have the best result and the GS of the results generated by MR-GAN is $0.372$, which is an improvement of $10.14\%$.
\begin{table}[h]
\centering
\caption{Geometry Score for the MNIST MR-WGAN ($\rho=6.4$)}
\begin{tabular}{llllllll}
\hline
$\lambda$ & 0.05 & 0.1 & 0.2 & 0.3 & 0.4  &0\\
GS & 0.405 & 0.403 & {\bf0.384} & 0.444 & 0.441 & {\textit {0.414}}\\\hline
\label{rho64}
\end{tabular}
\end{table}
\vspace{-0.2in}
\begin{table}[h]
\centering
\caption{Geometry Score for the MNIST MR-WGAN ($\rho=10$)}
\begin{tabular}{llllllll}
\hline
$\lambda$ & 0.005 & 0.01 & 0.02 & 0.1 & 0.2 & 0 \\
GS & 0.382 & {\bf 0.372} & {0.379} & 0.404 & 0.506 & {\textit{0.414}}\\\hline
\label{rho10}
\end{tabular}
\end{table}

\vspace{-0.1in}
\subsection{CelebA Dataset}
In this experiment, we use the CelebA dataset \cite{liu2015faceattributes}, which is comprised of $202,599$ images of celebrity faces. We trained a MR-DCGAN~\cite{radford2015unsupervised} using $90\%$ of the images from CelebA dataset. As in the DCGAN case, we rescale the data to lie in the range $[-1,1]$. We use the same architecture as the DCGAN implementation in the discriminator and generator networks. We also use the Adam Optimizer with a learning rate of $\gamma = 2\mathrm{e}{-4}$ for both networks. For the embedding function $\psi$, we use a convolutional autoencoder that embeds the training set of the CelebA dataset into a $100$ dimensional latent space.  

 We train the network with different values of $\lambda, \rho$ as explained earlier and report the quality of each GAN in Table~\ref{tab:dcgan}. We see that adding the proposed manifold regularization significantly improves the performance of the DCGAN (shown with $\lambda,\rho = (0.0, 0.0)$), leading to a geometry score that is lower by about $\sim 40\%$. Samples from the MR-DCGAN are shown in Figure \ref{fig:celeba_results}.

\begin{table}[h]
\centering
\caption{Geometry Score $(\times 1\mathrm{e}{-3})$ for the CelebA MR-DCGAN}
\begin{tabular}{p{1.3cm}p{1.3cm}p{1.3cm}p{1.3cm}p{1.3cm}}
\hline
$\lambda,\rho$ & (0.1, 0.3) & (0.5, 0.3) & (0.1, 0.5)  & (0.0,0.0) \\
GS &11.95 &25.20 &  \textbf{6.33} & {\textit{10.77}} \\\hline
\label{tab:dcgan}
\end{tabular}
\end{table}
\begin{figure}
    \centering
    \includegraphics[trim={8.0cm 0cm 7cm 0cm},clip,width=0.9\linewidth]{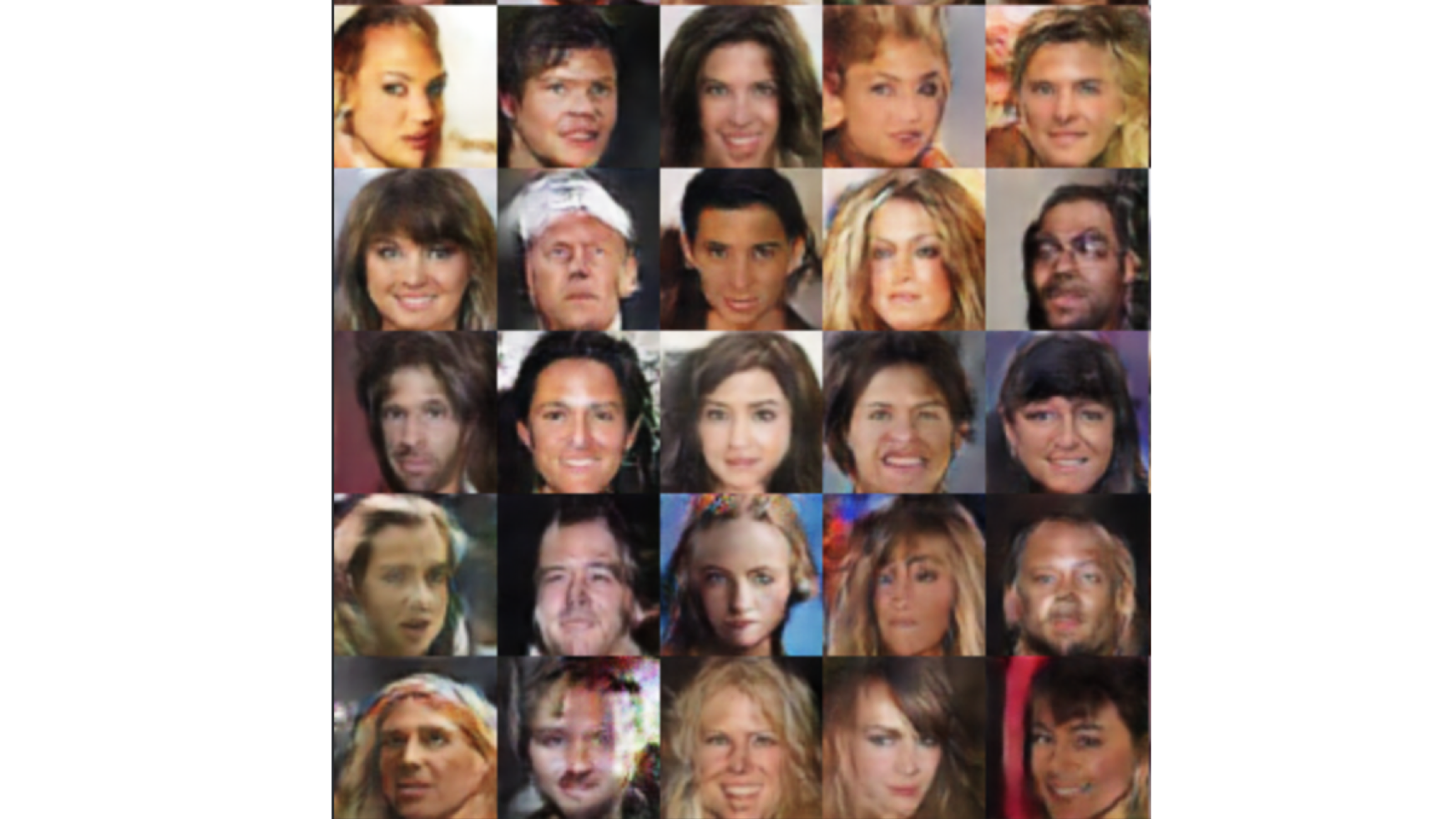}
    \caption{\small{Randomly generated samples from the MR-DCGAN.}}
    \label{fig:celeba_results}
\end{figure}

\vspace{-0.1in}
\section{Conclusion}
We studied the problem of training GANs. We proposed a manifold regularization method to force the generator to respect the unique manifold geometry of the real data in order to generate high quality data. Furthermore, we theoretically proved that the incorporation of this regularization term in any class of GANs leads to improved performance in terms of generalization, existence of equilibrium, and stability. We empirically showed that the proposed manifold regularization helps in avoiding mode collapse and leads to stable training. There are still many interesting questions that remain to be explored in the future such as establishing the global convergence properties of GAN training. It will also be interesting to explore the connection between the proposed method and the recently proposed Jacobian Clamping method~\cite{jacob_clip}. Other questions such as the case where both the discriminator and the generator are regularized can also be interesting to investigate.

{\small
	\bibliographystyle{ieee}
	\bibliography{GAN}
}

\appendix
\onecolumn
\section*{Supplementary Material for MR-GAN }
\begin{figure}[h]
    \centering
    \includegraphics[width=.5\textwidth]{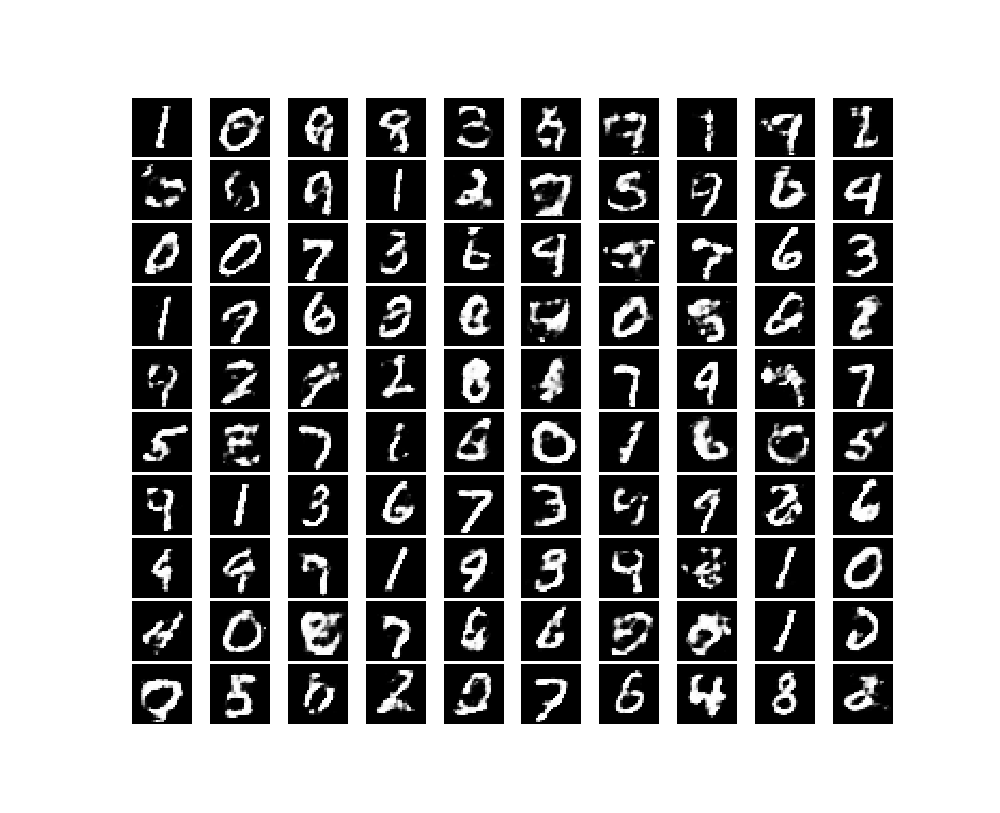}
    \caption{\small{Randomly generated digits from the MR-WGAN.}}
    \label{fig:mnist_sm}
\end{figure}
In Figure~\ref{fig:mnist_sm}, we present the results of the proposed MR-WGAN for the MNIST dataset. The resutls are obtained after 10,000 training iterations.
\section{Proof of Theorem \ref{embedding}}
\begin{thm*}[\bf 1]
	The optimal one-dimensional embedding function $\psi(x)$ exists and admits the following representation
	\begin{align}
	\psi(x)=\sum_{i=1}^{m}\alpha_i K(x_i,x),
	\end{align}
	where $K: \mathbb{R}^d \times \mathbb{R}^d \rightarrow \mathbb{R}$ is a Mercer kernel.
\end{thm*}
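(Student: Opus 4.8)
The plan is to recognize this as an instance of the classical representer theorem for manifold-regularized learning (as in \cite{belkin2006manifold}), applied to the inner minimization over $\psi$. First I would observe that in the joint objective \eqref{emp_cost_gen_embd} the data-fidelity term $\frac1m\sum_i \phi(1-\di(y_i))$ does not involve $\psi$ at all, so we may write $\min_{u,\psi} = \min_u\big[\frac1m\sum_i\phi(1-\di(y_i)) + \min_\psi \mathcal{R}(\psi)\big]$, where $\mathcal{R}(\psi)=\frac{\lambda}{m}\sum_i \|\nabla_\M(\psi(y_i)-\psi(x_i))\|^2$ (together with an ambient smoothness penalty $\|\psi\|_K^2$, which is standard in this setting and is what makes the minimizer well posed). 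It therefore suffices to characterize the minimizer of $\mathcal{R}$ over a reproducing kernel Hilbert space $\mathcal{H}_K$ associated with a Mercer kernel $K$, for a fixed generator $\g$ and hence a fixed finite collection of evaluation points.

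Next I would pass to the empirical/discretized form of the intrinsic gradient term: on the data graph, $\|\nabla_\M(\psi(y)-\psi(x))\|^2$ is estimated by $\frac{1}{m^2}\sum_{i,j}\|\psi(y_i)-\psi(x_i)-\psi(y_j)+\psi(x_j)\|^2 w_{ij}$ (cf.\ \eqref{mr-emp}), which is a function of $\psi$ only through its values at the finite point set $S=\{x_1,\dots,x_m\}$ (and, if the generated samples are also used as graph nodes, $\{y_1,\dots,y_m\}$). This is the crucial reduction: a functional that a priori looks like a Sobolev-type integral is realized as a function of finitely many point evaluations $\{\psi(z)\}_{z\in S}$.

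Then the standard orthogonal-decomposition argument applies. Write $\mathcal{H}_K = V\oplus V^\perp$ with $V=\mathrm{span}\{K(z,\cdot):z\in S\}$, and decompose any candidate $\psi=\psi_\parallel+\psi_\perp$. By the reproducing property, $\psi(z)=\langle\psi,K(z,\cdot)\rangle_K=\langle\psi_\parallel,K(z,\cdot)\rangle_K=\psi_\parallel(z)$ for every $z\in S$, so the discretized regularizer, which depends only on these values, is unchanged by dropping $\psi_\perp$; meanwhile $\|\psi\|_K^2=\|\psi_\parallel\|_K^2+\|\psi_\perp\|_K^2\ge\|\psi_\parallel\|_K^2$. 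Hence any minimizer satisfies $\psi_\perp=0$, i.e.\ $\psi(x)=\sum_{z\in S}\alpha_z K(z,x)$, which restricted to the real-data nodes is exactly $\psi(x)=\sum_{i=1}^m\alpha_i K(x_i,x)$. Existence then follows by a Weierstrass argument: once restricted to the finite-dimensional span $V$, the objective is continuous in the coefficient vector $\alpha$ and coercive (the term $\|\psi\|_K^2=\alpha^\top\mathbf{K}\alpha$ is coercive modulo the null space of the kernel matrix $\mathbf{K}$, on which $\psi$ itself is constant, and the remaining terms are bounded below), so a minimizer is attained.

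The main obstacle is the reduction in the second step: one must be sure the intrinsic regularizer genuinely reduces to a function of finitely many point evaluations. Working with the exact continuum functional $\int_\M\|\nabla_\M(\psi(y)-\psi(x))\|^2\,dP_x$ would require the theory of the Laplace--Beltrami operator and integration by parts on $\M$, and, more seriously, such a functional is not supported on a finite point set, so the orthogonality step does not directly apply. Using instead the graph-Laplacian estimator of \eqref{mr-emp} --- which is also how the objective is actually optimized --- circumvents this and makes the representer argument go through cleanly; one additionally needs the ambient penalty $\|\psi\|_K^2$ (with a small weight, as the paper notes) both to pin the minimizer down inside $V$ and to secure existence through coercivity.
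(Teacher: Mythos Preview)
Your core strategy---orthogonal decomposition in the RKHS together with the reproducing property---is the same as the paper's. Where you diverge is in how you force $\psi_\perp=0$.

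You introduce an explicit ambient penalty $\|\psi\|_K^2$ and argue that dropping $\psi_\perp$ leaves the discretized regularizer unchanged while strictly decreasing the norm. That is a clean and standard representer-theorem argument, but the term $\|\psi\|_K^2$ is \emph{not} present in the paper's objective \eqref{emp_cost_gen_embd}; you are effectively proving a slightly modified statement. The paper, by contrast, does not add any ambient penalty. It works directly with the graph-Laplacian quadratic form $FLF^T$, where $F=[\psi(y_i)-\psi(x_i)]_{i=1}^m$, and expands this after substituting $\psi=\psi_\parallel+\psi_\perp$ with $\psi_\parallel\in\mathrm{span}\{K(x_i,\cdot)\}_{i=1}^m$. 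The reproducing property kills $\psi_\perp(x_j)$ but not $\psi_\perp(y_j)$, and the paper isolates the nonnegative contribution $\sum_j\|\psi_\perp(y_j)\|_L^2$ coming from the positive-semidefinite Laplacian, concluding from this that the minimizer has $\psi_\perp=0$. In particular, the paper's finite span is over the real-data points $x_i$ only, matching the stated representation $\psi(x)=\sum_i\alpha_iK(x_i,x)$, whereas your set $S$ may include the $y_i$ as well.

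So: your route is closer to the textbook Belkin--Niyogi--Sindhwani representer theorem and buys a cleaner existence/uniqueness story via coercivity of $\|\psi\|_K^2$; the paper's route stays faithful to the actual objective and instead leans on the positive semidefiniteness of $L$ in the quadratic form $FLF^T$, at the price of handling the $\psi_\perp(y_j)$ terms explicitly.
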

\begin{proof}
	Consider samples $y_i$ and $x_i$ (same index $i$), $i=1,\ldots,m$, are drawn from the generator and the real data set, we use the empirical expressions. To find the optimal embedding function $\psi$, we write the cost function for the generator
	\begin{align}\label{emp_cost_gen_embd1}
	\min_{u\in \U, \psi}  \frac{1}{m}\sum_{x\sim \D_{real},y\sim \D_{\g}} [\phi(1-\di(y_i))
	+\lambda \|\nabla_\M (\psi(y_i)-\psi(x_i))\|^2].
	\end{align}
	By using the samples, any function $\psi$ derived can be uniquely decomposed into a component $\psi_{\parallel}$ in the linear space spanned by the kernel functions $\{K(x_i,\cdot)\}_{i=1}^{m}$, and a component $\psi_\perp$ orthogonal to it. Thus,
	\begin{align}
	\psi = \psi_{\parallel}+ \psi_\perp = \sum_{i=1}^{m}\alpha_i K(x_i,\cdot) + \psi_\perp.
	\end{align}
	By the reproducing property, the evaluation of $\psi$ on any data point $x_j$ is independent of the orthogonal component $\psi_\perp$:
	\begin{align}
	\psi(x_j) = \langle f, K(x_j,\cdot)\rangle = \langle \sum_{i=1}^{m}\alpha_i K(x_i,\cdot), K(x_j,\cdot)\rangle + \langle \psi_\perp, K(x_j,\cdot)\rangle.
	\end{align}
	Since the second term zeros out, and $\langle (x_i,\cdot), K(x_j,\cdot)\rangle = K(x_i,x_j)$, it follows that $\psi(x_j) = \sum_{i=1}^{m}\alpha_i K(x_i,x_j)$.
	
	Indeed, we find that
	\begin{align}
	FLF^T = \langle \psi(y), L\psi(y) \rangle - 2\langle \psi(y), L\psi(x)\rangle+ \langle \psi(x), L\psi(x) \rangle
	\end{align}
	where $F=[f_1,f_2,\ldots,f_m]$ and $f_i=\psi(y_i)-\psi(x_i)$, $\psi(x)=[\psi(x_1),\psi(x_2),\ldots,\psi(x_m)]$, and $\psi(y)=[\psi(y_1),\psi(y_2),\ldots,\psi(y_m)]$. Hence, it can be further written with respect to $L$-norm as
	\begin{align}
	FLF^T = \sum_{j=1}^{m}\|\sum_{i=1}^{m}\alpha_i K(x_i,y_j)\|^2_L + \sum_{j=1}^{m}\|\psi_\perp(y_j)\|^2_L +\sum_{j=1}^{m}\|\sum_{i=1}^{m}\alpha_i K(x_i,x_j)\|^2_L - \langle(\psi_\parallel(x),L\psi_\perp(y) \rangle,
	\end{align}
	where $\psi_\parallel(x) =[\psi_\parallel(x_1),\psi_\parallel(x_2),\ldots,\psi_\parallel(x_m)]$, and $\psi_\parallel(y) =[\psi_\parallel(y_1),\psi_\parallel(y_2),\ldots,\psi_\parallel(y_m)]$.
	It follows that the optimal embedding function $\psi$ of problem \ref{emp_cost_gen_embd1} must have $\psi_\perp=0$. Therefore, it admits a representation 
	\begin{align}
	\psi(x)=\sum_{i=1}^{m}\alpha_i K(x_i,x).
	\end{align}
\end{proof}

\section{Proof of Theorem \ref{thm_realizability}}
\begin{thm*}[\bf 2]
	Let $\hat{\D}_{real}$ and $\hat{\D}_\g$ be empirical versions with at least $m$ samples each for MR-GAN. There is a universal constant $C$ such that when $m \ge \frac{Cp\log(LL_\phi p/\epsilon)(\Delta+4\lambda M^2)^2}{\epsilon^2} $, we have with probability at least $1-\exp (1-p)$: 
	\begin{align}
	|F(u,v)-\hat F(u,v)|\le \epsilon.
	\end{align}
\end{thm*}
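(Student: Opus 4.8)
The plan is to establish the uniform bound by the standard two‑step route used for GAN generalization: pointwise concentration for a fixed parameter pair, then a union bound over an $\epsilon$‑net of the parameter space, exploiting that the objective is Lipschitz in the parameters.

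First I would record that the per‑sample integrand
\[
\Phi_{u,v}(x,y) := \phi(\di(x)) + \phi(1-\di(y)) + \lambda\,\|\nabla_\M(\psi(y)-\psi(x))\|^2
\]
is uniformly bounded: the two measuring‑function terms contribute at most $2\Delta$ by Assumption~\ref{as_2}(b), and the regularizer is at most $\lambda\big(\|\nabla_\M\psi(y)\|+\|\nabla_\M\psi(x)\|\big)^2 \le 4\lambda M^2$ by Assumption~\ref{as_2}(a). Hence $|\Phi_{u,v}|\le B$ with $B = \Delta+4\lambda M^2$ (absorbing the factor $2$ into the universal constant $C$), and both $F(u,v)=\E_{x\sim\D_{real}}\E_{y\sim\D_\g}\Phi_{u,v}(x,y)$ and its empirical version $\hat F(u,v)$ lie in $[-B,B]$. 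Next, for a \emph{fixed} $(u,v)$, I would view $\hat F(u,v)$ as a function of the $2m$ i.i.d.\ draws $x_1,\dots,x_m,y_1,\dots,y_m$: the $\phi$‑part is an ordinary empirical mean of bounded terms and the Laplacian‑weighted regularizer in \eqref{mr-emp} is a $U$‑statistic‑type double sum, so in either case changing one sample perturbs $\hat F$ by $O(B/m)$. McDiarmid's bounded‑differences inequality (equivalently Hoeffding for the mean part together with a Hoeffding‑type bound for the $U$‑statistic) then yields
\[
\Pr\!\big[\,|F(u,v)-\hat F(u,v)|>t\,\big] \le 2\exp\!\big(-\Omega(mt^2/B^2)\big).
\]

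Then I would uniformize over parameters. By Assumption~\ref{as_1}(a) the generator, and (as stated) the discriminator, is $L$‑Lipschitz in its parameters, Assumption~\ref{as_2}(c) makes $\phi$ $L_\phi$‑Lipschitz, and Assumptions~\ref{as_1}(b)--(c) control how $\nabla_\M\psi(G_u(h))$ varies with $u$; composing these shows $(u,v)\mapsto F(u,v)$ and $(u,v)\mapsto\hat F(u,v)$ are $L''$‑Lipschitz on $\U\times\V$, which lies in the unit ball of $\RR^{2p}$, with $L'' = \mathrm{poly}(L,L',L_\phi,L_\psi,\lambda,M)$. Take an $\epsilon'$‑net $\mathcal N$ of this ball with $\epsilon'=\epsilon/(4L'')$, so $|\mathcal N|\le(3/\epsilon')^{2p}$ and $\log|\mathcal N| = O\!\big(p\log(LL_\phi p/\epsilon)\big)$. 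Applying the pointwise bound with $t=\epsilon/2$ at every net point and union bounding, the failure probability is at most $|\mathcal N|\cdot 2\exp(-\Omega(m\epsilon^2/B^2))$, which is $\le\exp(1-p)$ exactly once $m \ge \frac{Cp\log(LL_\phi p/\epsilon)(\Delta+4\lambda M^2)^2}{\epsilon^2}$. Finally, for an arbitrary $(u,v)$, choosing the nearest net point $(u_0,v_0)$ and using the Lipschitz property of both $F$ and $\hat F$ gives $|F(u,v)-\hat F(u,v)| \le 2L''\epsilon' + \epsilon/2 \le \epsilon$, as claimed.

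The main obstacle is the regularizer: the mean‑of‑$\phi$ part of the argument is textbook, so the real work lies in (i) showing the empirical Laplacian form in \eqref{mr-emp} concentrates around the population term $\lambda\|\nabla_\M(\psi(y)-\psi(x))\|^2$ in \eqref{problem}, i.e.\ controlling both the $U$‑statistic fluctuation and any bias introduced by the heat‑kernel/graph‑Laplacian approximation of the intrinsic gradient, and (ii) verifying that $\nabla_\M\psi\circ G_u$ is bounded and Lipschitz in $u$, which requires combining Assumptions~\ref{as_1}(b), \ref{as_1}(c) and \ref{as_2}(a) carefully. Tracking the constants so that the logarithm emerges as $\log(LL_\phi p/\epsilon)$ is then routine bookkeeping that can be folded into $C$.
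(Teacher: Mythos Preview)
Your proposal is essentially the paper's own argument: Hoeffding-type concentration at each point of an $\epsilon$-net of the parameter space (with range $2\Delta+8\lambda M^2$), a union bound of size $\exp\!\big(O(p\log(LL_\phi p/\epsilon))\big)$, and a Lipschitz extension to arbitrary parameters---the paper nets only over $\V$ and splits off the $\phi(\di(x))$ term separately, but the mechanics are identical. You are in fact more careful than the paper in two places: you net over $\U\times\V$ rather than just $\V$, and the obstacle you flag about the graph-Laplacian empirical form in \eqref{mr-emp} concentrating around the intrinsic-gradient population term in \eqref{problem} is genuine, but the paper's proof simply applies Hoeffding to $f(\D_{real},\D_\g,v)$ versus $f(\hat\D_{real},\hat\D_\g,v)$ without addressing that discrepancy, so it is not something you need to resolve to match the paper.
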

\begin{proof}
	Let $\mathcal X$ be a finite set such that every point in $\V$ is within distance $\epsilon/8LL_\phi$ of a point in $X$ (a so-called $\epsilon/8LL_\phi$-net). Standard constructions give an $X$ satisfying $\log |\mathcal X| \le O(p\log(LL_\phi p/\epsilon))$. For every $v \in \mathcal{X}$, by Hoeffding inequality we know
	\begin{align}
	\Pr\left(|f(\D_{real}, \D_\g,v)-f(\hat{\D}_{real},\hat\D_\g,v)| \ge \frac{\epsilon}{4}\right) &\le 2\exp\left( -\frac{m^2\frac{\epsilon^2}{16}}{m(2\Delta+8\lambda M^2)^2}\right)\\ &=2\exp\left(-\frac{m\epsilon^2}{32(\Delta+4\lambda M^2)^2} \right) 
	\end{align}
	where $f(\D_{real},\D_\g,v)=\mathop\E_{x\sim \D_{real}}\mathop\E_{y\sim \D_\g}\phi(1-\di(y))+\lambda\|\nabla_\M(\psi(y)-\psi(x)\|^2$.
	Thus, we can union bound over all $v \in \mathcal{X}$ for large enough constant $C$
	\begin{align}
	\Pr\left(|f(\D_{real},\D_\g,v)-f(\hat{\D}_{real},\hat\D_\g,v)| \ge \frac{\epsilon}{4}\right) \le& 2|\mathcal{X}|\exp\left(-\frac{m\epsilon^2}{32(\Delta+4\lambda M^2)^2} \right)\\
	=&\exp\left(\log 2|\mathcal{X}|-\frac{m\epsilon^2}{32(\Delta+4\lambda M^2)^2} \right) \\
	\le & \exp\left(Cp\log(LL_\phi p/\epsilon) -\frac{m\epsilon^2}{32(\Delta+4\lambda M^2)^2}\right) .
	\end{align}
	Choose $m$ such that $m\ge \frac{Cp\log(LL_\phi p/\epsilon)(\Delta+4\lambda M^2)^2}{\epsilon^2}$, and thus, with high probability (at leat $1-\exp(-p)$) we have $|f(\D_{real},\D_\g,v)-f(\hat{\D}_{real},\hat\D_\g,v)| \le \frac{\epsilon}{4}$.
	
	Now, for $v\in \V$ and $v^\prime \in \mathcal X$ such that $\|v-v^\prime\|\le \epsilon/8LL_\phi$,
	we have
	\begin{align}
	|f(\D_{real},\D_\g,v)-f(\hat{\D}_{real},\hat\D_\g,v)| \le& |f(\D_{real},\D_\g,v^\prime)-f(\hat{\D}_{real},\hat\D_\g,v^\prime)|\\
	&+|f(\D_{real},\D_\g,v^\prime)-f(\D_{real},\D_\g,v)|\\
	&+|f(\hat \D_{real},\hat\D_\g,v^\prime)-f(\hat\D_{real},\hat\D_\g,v)|\\
	\le &\epsilon/4+\epsilon/8+\epsilon/8=\epsilon/2.
	\end{align}
	The value of $\epsilon/8$ results from Lipschitz continuity.
	
	Similarly, we can bound
	\begin{align}
	|	\mathop{\E}_{x\sim \D_{real}}\phi \left( \di(x,\xt)\right) -\mathop{\E}_{x\sim \hat\D_{real}}\phi \left( \di(x,\xt)\right)| \le \epsilon/2
	\end{align}
	with $m\ge \frac{Cp\log(LL_\phi p/\epsilon)\Delta^2}{\epsilon^2}$. Choose $m$ such that $m \ge \frac{Cp\log(LL_\phi p/\epsilon)(\Delta+4\lambda M^2)^2}{\epsilon^2}$.
\end{proof}

\section{Proof of Theorem \ref{thm_equilibrium}}
\begin{thm*}[\bf 3]
	If the generator can approximate any point mass by $\Eb_{h\sim \D_h}[\|\g(h)-x\|] \le \epsilon$, there is a universal constant $C > 0$ such that for any $\epsilon$, there exists $T=\frac{C\Delta^2 p \log (LL^\prime L_\phi p/\epsilon)}{\epsilon^2}$ generators $G_{u1}, \ldots, G_{uT}$. Let $\mathcal{S}_u$ be a uniform distribution on $u_i$, and $D$ is a discriminator that outputs $1/2$, then $(\mathcal S_u, D)$ is an $\epsilon$-approximate equilibrium for MR-GAN.
\end{thm*}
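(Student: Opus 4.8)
The plan is to adapt the mixture-of-generators construction of \cite{arora2017generalization} to the manifold-regularized objective, exploiting two features special to MR-GAN: the concavity of $\phi$ and the nonnegativity of the regularizer. I would set the target value $V:=2\phi(1/2)$ and let the discriminator side of the equilibrium be the pure strategy $D\equiv 1/2$, so that condition (b) of the definition, $\forall u\in\U,\ F(u,D)\ge V-\epsilon$, becomes immediate: for every $u\in\U$,
\begin{align}
F(u,D)&=\E_{x\sim\D_{real}}\E_{y\sim\D_\g}\Big[\phi(1/2)+\phi(1/2)+\lambda\|\nabla_\M(\psi(y)-\psi(x))\|^2\Big]\nn\\
&=2\phi(1/2)+\lambda\,\E_{x,y}\|\nabla_\M(\psi(y)-\psi(x))\|^2\ \ge\ V,\nn
\end{align}
since a squared norm is nonnegative. (This is precisely where the conventional regularizer $\|\nabla_\M\psi(y)\|^2$ fails, as a mode-collapsing generator could push it below its value at the real distribution, so $V$ could not be pinned at $2\phi(1/2)$.)

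For condition (a) I would build $\mathcal{S}_u$ as follows: draw $x_1,\dots,x_T$ i.i.d.\ from $\D_{real}$, invoke the point-mass hypothesis to pick $u_i$ with $\Eb_{h\sim\D_h}\|G_{u_i}(h)-x_i\|\le\epsilon'$ for a parameter $\epsilon'$ fixed later, and let $\mathcal{S}_u=\mathrm{Unif}(u_1,\dots,u_T)$. For a fixed $v\in\V$,
\begin{align}
\E_{u\sim\mathcal{S}_u}F(u,v)&=\E_{x\sim\D_{real}}\phi(\di(x))+\tfrac1T\sum_i\Eb_h\phi\big(1-\di(G_{u_i}(h))\big)\nn\\
&\quad+\lambda\tfrac1T\sum_i\Eb_h\|\nabla_\M(\psi(G_{u_i}(h))-\psi(x_i))\|^2.\nn
\end{align}
By Assumption~\ref{as_1}(c) the last term is at most $\lambda L_\psi^2\,\tfrac1T\sum_i\Eb_h\|G_{u_i}(h)-x_i\|^2=O(\lambda L_\psi^2\epsilon')$, controlling the second moment by the first via boundedness of the sample domain. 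Since $\phi(1-\di(\cdot))$ is $L_\phi L'$-Lipschitz in its input by Assumptions~\ref{as_1}(b) and~\ref{as_2}(c), each $\Eb_h\phi(1-\di(G_{u_i}(h)))$ is within $L_\phi L'\epsilon'$ of $\phi(1-\di(x_i))$, so the middle term is within $L_\phi L'\epsilon'$ of $\E_{x\sim\hat{\D}_{real}}\phi(1-\di(x))$, where $\hat{\D}_{real}=\tfrac1T\sum_i\delta_{x_i}$.

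It then remains only to replace $\D_{real}$ by $\hat{\D}_{real}$ in the first term uniformly over $v\in\V$, which I would do exactly as in the proof of Theorem~\ref{thm_realizability}: take an $\epsilon/(8LL_\phi)$-net $\mathcal X$ of $\V$ with $\log|\mathcal X|=O(p\log(LL_\phi p/\epsilon))$, apply Hoeffding to $\phi(\di(x))$ (valued in $[-\Delta,\Delta]$) at each net point, union bound, and extend to all of $\V$ via the $L$-Lipschitz dependence of $\di$ on $v$ (Assumption~\ref{as_1}(a)) composed with $\phi$. Taking $T=\frac{C\Delta^2 p\log(LL'L_\phi p/\epsilon)}{\epsilon^2}$ makes every such deviation at most a constant fraction of $\epsilon$ with probability $\ge 1-\exp(1-p)$, the extra $L'$ inside the logarithm entering through the Lipschitz step on the generator input. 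Combining the three bounds and applying Jensen pointwise to the concave $\phi$ (so $\phi(t)+\phi(1-t)\le 2\phi(1/2)$),
\begin{align}
\E_{u\sim\mathcal{S}_u}F(u,v)&\le\E_{x\sim\hat{\D}_{real}}\big[\phi(\di(x))+\phi(1-\di(x))\big]+O(\epsilon)\nn\\
&\le 2\phi(1/2)+O(\epsilon)\ =\ V+O(\epsilon),\nn
\end{align}
and rescaling $\epsilon'$ and the net resolution by fixed constants absorbs the accumulated error into $\epsilon$.

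The main obstacle is bookkeeping rather than ideas: one must verify that every approximation error — the point-mass error $\epsilon'$, the net resolution, the Hoeffding deviation, and the $\lambda L_\psi^2$ term — is individually driven below $\epsilon/c$ for a large enough constant $c$ so that they sum to at most $\epsilon$, while keeping $T$ at the claimed order; in particular one must check that the regularizer contributes only lower-order terms (so it does not inflate $T$) and that the second-moment quantity $\Eb_h\|G_{u_i}(h)-x_i\|^2$ is legitimately controlled by the first-moment hypothesis. Conceptually the argument is short: concavity of $\phi$ supplies the ceiling $2\phi(1/2)$, nonnegativity of the regularizer supplies the matching floor, and the mixture generator together with uniform concentration identifies the empirical and population values. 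The ReLU-layer construction of Lemma~4 of \cite{arora2017generalization} then realizes $\mathcal{S}_u$ as a single network, as remarked after the theorem.
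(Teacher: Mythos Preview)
Your route differs from the paper's in a substantive way. The paper proceeds in two stages: first it pins the minimax value at $2\phi(1/2)$ by a two-sided argument (the constant discriminator gives the lower bound; a Wasserstein-$\alpha$ mixture $\D_\alpha$ built from the point-mass hypothesis gives the upper bound, sending $\alpha\to0$), and then it \emph{samples $u_1,\dots,u_T$ from an abstract optimal mixed strategy $\mathcal S'_u$} and applies Hoeffding directly to the full payoffs $F(u_i,v)$ over an $\epsilon/(4LL'L_\phi)$-net on $\V$. You instead build the mixture explicitly by sampling $x_i\sim\D_{real}$ and attaching a near-point-mass generator to each, apply Hoeffding only to the scalar $\phi(\di(x))$, and close with the pointwise concavity bound $\phi(t)+\phi(1-t)\le2\phi(1/2)$. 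Your construction is more concrete (no appeal to an existing optimal mixed strategy), whereas the paper's is shorter because a single Hoeffding step on $F$ handles all three pieces of the objective at once.

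There is one genuine slip. In your display for $\E_{u\sim\mathcal S_u}F(u,v)$ the regularizer should read
\[
\lambda\,\tfrac1T\sum_i\Eb_h\,\E_{x\sim\D_{real}}\bigl\|\nabla_\M\bigl(\psi(G_{u_i}(h))-\psi(x)\bigr)\bigr\|^2,
\]
with an \emph{independent} $x\sim\D_{real}$, not the anchor $x_i$; as written you have silently coupled the real sample in the objective to the generator's target. Your $O(\lambda L_\psi^2\epsilon')$ bound therefore controls the wrong quantity. In fairness, the paper's own treatment of this term is in the same spirit---it asserts $\E_{y\sim\D_\alpha}\E_{x\sim\D_{real}}\|\nabla_\M(\psi(y)-\psi(x))\|^2\le L_\psi^2\alpha^2$ via an implicit coupling---so you are not departing from the paper's logic, but you have mis-stated the objective. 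A clean fix that stays within your framework is to keep the correct expression and either (i) argue the same coupling heuristic the paper uses, or (ii) fold the (bounded) regularizer into the Hoeffding range and run the concentration on $F(u_i,v)$ rather than on $\phi(\di(x))$ alone, which is exactly the paper's second step.
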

\begin{proof}
	We first prove the value of the function $F(u,v)$ of the game at the equilibrium must be equal to $2\phi(1/2)$. This strategy has payoff $2\phi(1/2)$ no matter what the generator does, so $V \ge 2\phi(1/2)$.
	
	For the generator, we use the assumption that for any point $x$ and any $\epsilon >0$, there is a generator (which we dentoe by $G_{x,\epsilon}$) such that $\Eb_{h\sim \D_h}\|G_{x,\epsilon}(h)-x\|\le \epsilon$. Now for any $\alpha >0$, consider the following mixture of generators: sample $x \sim \D_{real}$, then use the generator $G_{x,\alpha}$. Let $\D_\alpha$ be the distribution generated by this mixture of generators. The Wasserstein distance between $\D_\alpha$ and $\D_{real}$ is bounded by $\alpha$. Since the discriminator is $L^\prime$-Lipschitz, it cannot distinguish between $\D_\alpha$ and $\D_{real}$. In particular we know for any discriminator $\di$
	\begin{align}
	|\mathop\Eb_{y\sim \D_\alpha}[\phi(1-\di(y))]-\mathop\Eb_{x\sim \D_{real}}[\phi(1-\di(x))]| \le O(L_\phi L^\prime \alpha).
	\end{align}
	Therefore,
	\begin{align}\label{itm1}
	&\max_{v\in \V} \mathop\Eb_{y\sim \D_\alpha}\mathop\Eb_{x\sim \D_{real}}[\phi(\di(x))]+[\phi(1-\di(y))]+\lambda \|\nabla_\M (\psi(y)-\psi(x))\|^2 \nonumber\\
	& \le \max_{v\in \V} \mathop\Eb_{y\sim \D_\alpha}\mathop\Eb_{x\sim \D_{real}}-\phi(1-\di(x))+\phi(1-\di(y))+\lambda \|\nabla_\M (\psi(y)-\psi(x))\|^2 \nonumber\\&+ \max_{v\in \V} \mathop\Eb_{x\sim \D_{real}}\phi(\di(x))+\phi(1-\di(x)) \nonumber\\
	& \le O(L_\phi L^\prime \alpha) +\lambda L_{\psi}^2\alpha^2+2\phi(1/2)
	\end{align}
	Here the last step uses the assumption that $\phi$ is concave. Therefore the value is upperbounded by $V\le O(L_\phi L^\prime \alpha) +\lambda L_{\psi}^2\alpha^2+2\phi(1/2)$ for any $\alpha$. Taking limit of $\alpha$ to $0$, we have $V=2\phi(1/2)$.
	
	The value of the game is $2\phi(1/2)$ in particular means the optimal discriminator cannot do anything other than a random guess. Therefore we will use a discriminator that outputs $1/2$. Next we will construct the generator.
	
	Let $\{\mathcal{S}^\prime_u,\mathcal{S}^\prime_v\}$ be the pair of optimal mixed strategies as in the theorem and $V$ be the optimal value. We will show that randomly sampling $T$ generators from $\mathcal{S}^\prime_u$ gives the desired mixture with high probability. 
	
	Construct $\epsilon/4LL^\prime L_\phi$-nets $\V$ for the parameters of the discriminator (for any $v, v^\prime \in \V, \|v-v^\prime\| \le \epsilon/4LL^\prime L_\phi$). By standard construction, the sizes of these $\epsilon$-nets satisfy $\log|\V| \le C^\prime p\log(LL^\prime L_\phi p/\epsilon) $ for some constant $C^\prime$. Let $u_1,\ldots,u_T$ be independent samples from $\mathcal{S}^\prime_u$. By Hoeffding's inequality, for any $v\in \V$, we know
	\begin{align}
	P(\mathop\Eb_{i\in[T]}F(u_i,v) -\mathop\Eb_{u\in \U }F(u,v) \ge \frac{\epsilon}{2} ) \le \exp \left(-\frac{2T^2\frac{\epsilon^2}{4}}{T4\Delta^2}\right) =\exp \left(-\frac{T\epsilon^2}{8\Delta^2}\right)
	\end{align}
	
	Now for all $v\in \V$, with union bound, we have
	\begin{align}
	\forall v \in \V,	&P(\mathop\Eb_{i\in[T]}F(u_i,v) -\mathop\Eb_{u\in \U }F(u,v) \ge \frac{\epsilon}{2} ) \le  |\V|\exp \left(-\frac{T\epsilon^2}{8\Delta^2}\right)\\
	& \le \exp \left(C^\prime p\log(LL^\prime L_\phi p/\epsilon)-\frac{T\epsilon^2}{8\Delta^2}\right).
	\end{align}
	Thus, when $T=\frac{C\Delta^2p\log(LL^\prime L_\phi p/\epsilon)}{\epsilon^2}$ and $C\ge 8C^\prime$, with high probability
	\begin{align}\label{itm2}
	\mathop\Eb_{i\in[T]}F(u_i,v) \le \mathop\Eb_{u\in \U }F(u,v) + \frac{\epsilon}{2} .
	\end{align}
	
	By construction of the net, we have $\|v-v^\prime\|\le \frac{\epsilon}{4LL^\prime L_\phi}$. 
	It is easy to find that $F(u,v)$ is $2LL^\prime L_\phi$-Lipschitz with respect to $v$, and therefore,
	\begin{align}
	\mathop\Eb_{i\in[T]}F(u_i,v^\prime) \le 	\mathop\Eb_{i\in[T]}F(u_i,v)+2LL^\prime L_\phi \frac{\epsilon}{4LL^\prime L_\phi} =\mathop\Eb_{i\in[T]}F(u_i,v)+ \frac{\epsilon}{2}
	\end{align}
	Together with the inequality \eqref{itm2}, we obtain 		\begin{align}
	\forall v^\prime \in \V,	\mathop\Eb_{i\in[T]}F(u_i,v^\prime) \le 2\phi(1/2)+\epsilon.
	\end{align}
	
	This means the mixture of generators can win against any discriminator. By probabilistic argument we know there must exit such generators. The discriminator (outputs 1/2) obviously achieve value $V$ no matter what the generator is. Therefore we get an approximate equilibrium.
\end{proof}

\section{Proof of Theorem \ref{thm_stable}}
\begin{thm*}[\bf 4]
	The dynamical system defined by the MR-GAN objective in Equation \eqref{problem} and the updates in
	Equation \eqref{updates} is locally exponentially stable with respect to an equilibrium point $(u^\ast, v^\ast)$.
\end{thm*}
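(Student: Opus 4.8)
The plan is to invoke the linearization theorem stated above: since local exponential stability follows once the Jacobian $\J = \partial h(\theta)/\partial\theta$ of the MR-GAN dynamics \eqref{updates}, evaluated at the equilibrium $(u^\ast,v^\ast)$, is Hurwitz, it suffices to show that every eigenvalue of $\J$ has strictly negative real part. I would start from the decomposition $F = F_0 + \lambda R$, where $F_0$ is the value of the underlying unregularized GAN and $R(u)=\E_{x\sim\D_{real}}\E_{y\sim\D_\g}\|\nabla_\M(\psi(y)-\psi(x))\|^2$ is the manifold regularizer. Because $R$ depends on $u$ alone, differentiating the updates gives $\J = \J_0 + \lambda\J_R$, where $\J_0$ is precisely the Jacobian of the unregularized GAN system analyzed in \cite{nagarajan2017gradient}, and $\J_R$ is block-diagonal, vanishing outside the generator block, where (with the sign dictated by the min-max convention) it equals $-\nabla^2_{uu}R(u^\ast)$.

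Two facts then drive the argument. First, I would check that the regularizer does not displace the equilibrium, i.e. $\nabla_u R(u^\ast)=0$, so that $(u^\ast,v^\ast)$ remains a stationary point of \eqref{updates}: at the GAN equilibrium the generated and real distributions coincide, and, using Assumption~\ref{as_1}(c) together with the point-mass approximation property of the generator from Theorem~\ref{thm_equilibrium}, the first-order variation of $R$ along any admissible generator direction cancels. Second, exploiting the $L^2$-type structure of $R$ emphasized in the ``Bounded Objective Function'' discussion -- $R$ is an average of squared norms of $\nabla_\M(\psi(\g(h))-\psi(x))$ -- I would show that $\nabla^2_{uu}R(u^\ast)$ is positive semidefinite, and in fact strictly positive definite on the subspace of generator perturbations that actually perturb $\D_\g$, the nondegeneracy coming from Assumptions~\ref{as_1}(a)--(c). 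Hence the perturbation $\lambda\J_R$ is negative definite exactly on the generator-active directions.

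Given this, the Hurwitz property of $\J = \J_0 + \lambda\J_R$ can be obtained by a perturbation/Lyapunov argument. For the standard GAN, $\J_0$ is already marginally stable, so a strictly negative-definite perturbation of its generator block pushes every eigenvalue into the open left half-plane; for WGAN, $\J_0$ carries purely imaginary eigenvalues precisely along the directions annihilated by the discriminator-block and generator-block Hessians of $F_0$, and those are the very directions on which $\lambda\J_R$ acts, so $\J$ becomes Hurwitz. Concretely I would either construct $P\succ 0$ solving $\J^\top P + P\J \prec 0$, or adapt the block Routh--Hurwitz computation of \cite{nagarajan2017gradient} with the extra negative term carried through; either way $\J$ Hurwitz yields local exponential stability via the linearization theorem.

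The step I expect to be the main obstacle is the second one: showing that $\nabla^2_{uu}R(u^\ast)$ is \emph{nondegenerate precisely on the marginally-stable directions of $\J_0$} -- the directions along which WGAN fails to be even asymptotically stable. This amounts to relating the kernel of $\nabla^2_{uu}R$ to the kernel of the coupling block $\nabla^2_{uv}F$, which is exactly where the smoothness of the embedding in Assumption~\ref{as_1}(c) and the nondegeneracy of the generator parametrization in Assumptions~\ref{as_1}(a)--(b) become indispensable, and where most of the technical effort will go.
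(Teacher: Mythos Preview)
Your approach differs substantially from the paper's. The paper does not decompose $\J$ into an unregularized part plus a regularizer correction; instead it computes the four blocks of the full MR-GAN Jacobian directly at $(u^\ast,v^\ast)$. Working in the density parametrization, it obtains $\J_{vv}=2\phi''(\tfrac12)\,\E_{x}[\nabla_v D_{v^\ast}(x)\nabla_v D_{v^\ast}(x)^\top]$ (negative semidefinite by concavity of $\phi$), shows the off-diagonal blocks are skew, $\J_{uv}=-\J_{vu}^\top$, and---crucially---argues that the entire generator block $\J_{uu}$, regularizer contribution included, vanishes at the equilibrium. The Hurwitz property is then read off from this block structure via the lemmas it cites. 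So the paper's route is a direct block-by-block computation rather than a perturbation argument, and at the linearized level the regularizer ends up playing no role.

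This is in direct tension with your plan. Your argument rests on the regularizer contributing a \emph{strictly} negative-definite term $-\lambda\nabla^2_{uu}R(u^\ast)$ to $\J_{uu}$, whereas the paper's computation gives $\J_{uu}=0$. If the paper is right, your stabilizing mechanism evaporates at first order and the perturbation route cannot proceed; if your intuition is right, the paper's $\J_{uu}=0$ step is precisely where the regularizer's effect is being lost (and indeed, with $\J_{uu}=0$ the Jacobian coincides with that of the unregularized game, which sits awkwardly with the paper's own claim that manifold regularization stabilizes WGAN at the linear level). Before attacking the nondegeneracy of $\nabla^2_{uu}R(u^\ast)$ on the marginally-stable directions---which you correctly flag as the hard step---you should first settle whether that Hessian is nonzero at all. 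Relatedly, your claim that $\nabla_u R(u^\ast)=0$ ``because the distributions coincide'' needs a real argument: $p_{u^\ast}=p_{real}$ does not by itself force $\int_\Y \nabla_u p_u(y)\,\bar g(y)\,dy=0$ unless $\bar g(y)=\E_{x}\|\nabla_\M(\psi(y)-\psi(x))\|^2$ happens to be constant on the support, so the equilibrium-preservation step is currently a gap.
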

\begin{proof}
	To derive the Jacobian, we begin with subtly different algebraic form of the GAN objective by 
	\begin{align}
	F(u,v)= \mathop{\E}_{x\sim \D_{real}}\int_{\mathcal{Y}}(p_u(y)[\phi \di(x)+ \phi(1-\di(y))]+\lambda \|\nabla_\M (\psi(y)-\psi(x))\|^2)dy.
	\end{align}
	Thus, we have the following form of the dynamic ODE system
	\begin{align}
	&\dot{u}=-\frac{\partial F(u,v)}{\partial u} = -\mathop{\E}_{x\sim \D_{real}}\int_{\mathcal{Y}}(\nabla_u p_u(y) \phi(1-\di(y))+\lambda \|\nabla_\M (\psi(y)-\psi(x))\|^2)dy\\
	&\dot{v} =\frac{\partial F(u,v)}{\partial v} = \mathop{\E}_{x\sim \D_{real}}\mathop{\E}_{y\sim p_u(y)}[\phi^\prime \di(x)\nabla_v
	\di(x)- \phi^\prime(1-\di(y))\nabla_v
	\di(y)].
	\end{align}
	
	The Jocabian matrix $\J$ consists of blocks as
	\begin{align}
	\J = \begin{pmatrix}
	\J_{vv} & \J_{vu}\\\
	\J_{uv} & \J_{uu}
	\end{pmatrix}
	\end{align}
	Then $\J_{vv}$ is:
	\begin{align}
	\J_{vv} =& \left.\nabla_v^2 F(u,v)\right|_{u=u^\ast,v=v^\ast}=\left.\frac{\partial \dot{v}}{\partial v}\right|_{u=u^\ast,v=v^\ast}=\left.\frac{\left.\partial \dot{v}\right|_{u=u^\ast}}{\partial v}\right|_{v=v^\ast} \\
	=&\left.\frac{\partial(\mathop{\E}_{x\sim \D_{real}}[\phi^\prime \di(x,\xt)\nabla_v
		\di(x)- \phi^\prime(1-\di(x))\nabla_v
		\di(x)])}{\partial v}\right|_{v=v^\ast}\\
	=& \left. \mathop{\E}_{x\sim \D_{real}}\phi^{\prime\prime}(\di(x)) \nabla_v\di(x)\nabla^T_v\di(x)+\phi^\prime(\di(x))\nabla^2_v\di(x)\right|_{v=v^\ast} \\
	& \left. \mathop{\E}_{x\sim \D_{real}}\phi^{\prime\prime}(\di(x,\xt)) \nabla_v\di(x)\nabla^T_v\di(x)-\phi^\prime(\di(x))\nabla^2_v\di(x)\right|_{v=v^\ast} \\
	=&2\phi^{\prime\prime}(\frac{1}{2}) \left. \mathop{\E}_{x\sim \D_{real}} \nabla_v\di(x)\nabla^T_v\di(x)\right|_{v=v^\ast}.
	\end{align}
	
	The matrix $\J_{vu}$ is:
	\begin{align}
	\J_{vu} =& \left.\frac{\partial \dot{v}}{\partial u}\right|_{u=u^\ast,v=v^\ast}=\left.\frac{\left.\partial \dot{v}\right|_{v=v^\ast}}{\partial u}\right|_{u=u^\ast} \\
	=& \frac{\partial}{\partial u}\mathop{\E}_{x\sim \D_{real}}\left.\mathop{\E}_{y\sim p_u(y)}[-\phi^\prime (\frac{1}{2})\nabla_v\di(y)]\right|_{u=u^\ast,v=v^\ast}\\
	=&\left. -\phi^\prime (\frac{1}{2})\mathop{\E}_{x\sim \D_{real}}\int_{\mathcal{Y}}\nabla_v\di(y)\nabla_u^Tp_u(y)dy\right|_{u=u^\ast,v=v^\ast}
	\end{align}
	
	The matrix 	$\J_{uv}$ is:
	\begin{align}
	\J_{uv}=& \left.\frac{\partial \dot{u}}{\partial v}\right|_{u=u^\ast,v=v^\ast}=\left.\frac{\left.\partial \dot{u}\right|_{u=u^\ast}}{\partial v}\right|_{v=v^\ast} \\
	=&- \frac{\partial}{\partial v}\mathop{\E}_{x\sim \D_{real}}\left.\int_{\mathcal{Y}}\nabla_up_u(y)[\phi(1-\di(x))]dy\right|_{u=u^\ast,v=v^\ast}\\
	=&- \mathop{\E}_{x\sim \D_{real}}\left.\int_{\mathcal{Y}}\nabla_up_u(y)(-\phi^\prime(1-\di(x)))\nabla_v^T\di(x)dy\right|_{u=u^\ast,v=v^\ast}\\
	=& \left. \phi^\prime (\frac{1}{2})\mathop{\E}_{x\sim \D_{real}}\int_{\mathcal{Y}}\nabla_up_u(y)\nabla_v^T\di(x)dy\right|_{u=u^\ast,v=v^\ast} =-\J_{vu}
	\end{align}
	
	Now, to show that $\J_{uu}$ is zero, we take any vector $\vb$ that is a perturbation in the generator space
	and show that $\vb^T\J_{uu}=0$. Here, we will use the limit definition of the derivative along a particular
	direction $\vb$.
	\begin{align}
	\left.\vb^T\frac{\partial \dot{u}}{\partial u}\right|_{u=u^\ast,v=v^\ast} =&\left.\vb^T\frac{\left. \partial \dot{u}\right|{v=v^\ast}}{\partial u}\right|_{u=u^\ast}\\
	=&-\lim\limits_{\substack{u-u^\ast=\epsilon\vb,\\\epsilon\rightarrow 0} } \frac{\E_{\D_{real}}\int_{\mathcal{Y}}(\nabla_u p_u(y) \phi(1-D_{v\ast}(y))+\lambda \|\nabla_\M (\psi(y)-\psi(x))\|^2)dy}{\epsilon}\\
	=& -\lim\limits_{\substack{u-u^\ast=\epsilon\vb,\\\epsilon\rightarrow 0} } \frac{\int_{supp(\D_{real})}(\nabla_u p_u(y) \phi(1-D_{v\ast}(x))dy}{\epsilon}\\
	=&-\phi(\frac{1}{2})\lim\limits_{\substack{u-u^\ast=\epsilon\vb,\\\epsilon\rightarrow 0} } \frac{\nabla_u\int_{supp(\D_{real})} p_u(y) dy}{\epsilon} =0\\
	\end{align}
	
	According to Lemma C.3 and Lemma G.2 in \cite{arora2017generalization}, the Jacobian matrix is Hurwitz and the training is locally exponentially stable.
\end{proof}

\end{document}